\documentclass{article}

\usepackage{PRIMEarxiv}

\usepackage[utf8]{inputenc} 
\usepackage[T1]{fontenc}    
\usepackage{hyperref}       
\usepackage{url}            
\usepackage{booktabs}       
\usepackage{amsfonts}       
\usepackage{nicefrac}       
\usepackage{microtype}      
\usepackage{lipsum}
\usepackage{fancyhdr}       
\usepackage{graphicx}       
\graphicspath{{media/}}     

\title{StructZO: Structured Zeroth-Order Optimization for Memory-Efficient LLM Fine-Tuning
}

\usepackage{multirow}
\usepackage{graphicx}
\usepackage{amsmath}
\usepackage{amssymb}
\usepackage{mathtools}
\usepackage{amsthm}
\usepackage{multirow}
\usepackage{subcaption}

\author{
  Xuanlin Yang \\
  Peking University \\
  \texttt{2401112159@stu.pku.edu.cn} \\
   \And
  Jia Li \\
  Beijing Normal University \\
  \texttt{jiali.gm@gmail.com} \\
  \And
  Zhouchen Lin \\
  Peking University \\
  ZLIN@pku.edu.cn
}

\usepackage{bm}
\usepackage{algorithm}
\usepackage{algpseudocode}
\newtheorem{theorem}{Theorem}

\newtheorem{prop}[theorem]{Proposition}
\newtheorem{assumption}[theorem]{Assumption}

\newtheorem{remark}[theorem]{Remark}

\newcommand{\comment}[1]{}

\newcommand{\bx}{\mathbf{x}}

\newcommand{\bz}{\mathbf{z}}

\newcommand{\bI}{\mathbf{I}}

\newcommand{\bP}{\mathbf{P}}

\newcommand{\bzero}{\mathbf{0}}

\newcommand{\<}{\left\langle}
\renewcommand{\>}{\right\rangle}

\begin{document}
\maketitle

\begin{abstract}
Fine-tuning large language models at scale is increasingly limited by the memory cost of backpropagation. Zeroth-Order (ZO) optimizers reduce this burden by relying only on forward function evaluations, but they still exhibit a substantial performance gap to first-order (FO) optimizers such as Adam. In this work, we argue that this gap does not arise solely from the absence of explicit gradients: FO fine-tuning also benefits from informative low-rank optimization geometry, large-scale signal aggregation, and well-conditioned update directions. Motivated by this perspective, we introduce \textbf{StructZO}, a \textbf{struct}ured \textbf{Z}eroth-\textbf{O}rder optimization framework for memory-efficient LLM fine-tuning. StructZO performs large-query ZO estimation within an activation-induced informative subspace and applies spectral control to the resulting reduced-space update before mapping it back to the original parameters. 
This unified update procedure enables ZO optimization to aggregate useful directional signals in a compact, geometry-aware space while avoiding the activation and optimizer-state memory costs of backpropagation. 
We provide a theoretical analysis showing that, when the induced subspace captures concentrated optimization-relevant directions, StructZO reduces the effective ZO estimation dimension from $mn$ to $mr$, with the remaining projection error governed by the spectral tail of the activation Gram matrix. Empirically, across OPT-series and Llama3.1-8B on diverse downstream fine-tuning tasks, StructZO improves over the strongest existing ZO baselines by up to $8.7\%$ relative on average across model settings and up to $26.9\%$ relative on individual tasks, while remaining within $1.5\%$ of FO-Adam on average. These results suggest that in the evaluated fine-tuning settings, structured and spectrally controlled ZO updates can substantially narrow the gap between ZO and FO optimization.
\end{abstract}
\section{Introduction}
\label{seq:intro}
Fine-tuning large language models (LLMs) has become a central component in adapting large pretrained models to downstream tasks \cite{raffel2020exploring}. While first-order (FO) optimizers such as Adam \cite{DBLP:journals/corr/KingmaB14} and AdamW \cite{DBLP:conf/iclr/LoshchilovH19} are commonly used in fine-tuning, they require full backpropagation through the model and therefore incur substantial memory overhead from storing intermediate activations and optimizer states. These limitations have driven increasing interest in parameter- and memory-efficient fine-tuning methods \cite{DBLP:conf/iclr/HuSWALWWC22,ding2023parameter,DBLP:conf/icml/Zhao0CWAT24,DBLP:conf/acl/LiL20}. Zeroth-Order (ZO) optimization \cite{nesterov2017random}, exemplified by MeZO \cite{malladi2023fine}, provides a promising alternative by replacing gradient computation with forward-pass-only perturbation estimates. As a result, MeZO reduces memory consumption close to the inference level, making full-parameter LLM fine-tuning feasible under tight memory budgets.

Despite this memory advantage, ZO fine-tuning still suffers from slow convergence and a persistent performance gap to first-order methods, largely due to noisy estimates in high-dimensional and ill-conditioned parameter spaces. Since MeZO, recent work has improved ZO fine-tuning from several perspectives, including low-rank structure or subspace \cite{DBLP:conf/iclr/ChenZCYW25,DBLP:journals/corr/abs-2601-17261,DBLP:journals/corr/abs-2602-17155}, alternative sampling distributions \cite{ma2025revisiting,dang2025fzoo}, momentum and variance-reduction techniques \cite{yuepseuzo,DBLP:conf/icml/GautamPZRH24}, multi-sample estimators \cite{DBLP:conf/emnlp/PengLWC25}, and Hessian- or curvature-informed updates \cite{DBLP:conf/iclr/ZhaoDYD0T25,zhao2025helene}. These advances show that the naive full-space random estimator is far from optimal. However, most existing methods still focus primarily on improving the gradient estimation itself, while remaining limited by query efficiency, search-space quality, or update conditioning. In contrast, FO fine-tuning benefits from more than gradient access alone: it also exploits informative optimization geometry, aggregates rich derivative signals, and applies well-conditioned update transformations. This motivates the following question: \emph{Can we design a forward-only ZO optimizer that recovers these structural benefits of FO fine-tuning without incurring the memory cost of backpropagation?}

In this paper, we propose \textbf{StructZO}, a \textbf{struct}ured \textbf{Z}eroth-\textbf{O}rder optimization framework that improves forward-only LLM fine-tuning by performing large-query directional signal aggregation inside activation-induced informative subspaces, thereby converting additional ZO queries into more useful low-dimensional update estimates without storing activations for backpropagation. Since these subspaces are intended to approximate the right-side geometry of the gradient, StructZO applies Muon-style spectral shaping directly in the reduced coordinates before mapping the update back to the original parameter space, producing well-conditioned directions that better reflect FO-style optimization behavior.

Existing works address these factors in isolation, while StructZO provides the first \textit{coupled design}: the informative reduced space determines where ZO queries are spent, large-query estimation aggregates signals within this space, and spectral shaping is applied in the same coordinates before the update is lifted back. This coupling changes the role of multi-query ZO from noisy full-space gradient estimation to structured signal aggregation, making it possible to analyze the resulting variance--bias trade-off. Empirically, StructZO consistently improves over existing ZO baselines under comparable forward-query budgets and narrows the gap to FO-Adam in our evaluated fine-tuning settings.

Our contributions are summarized as follows:
\begin{itemize}
    \item We introduce StructZO, a unified structured ZO framework that performs estimation and update conditioning in an informative reduced space, recovering several key optimization benefits of FO fine-tuning without backpropagation.

    \item We show that this structured update space makes large-query ZO estimation effective in practice: additional queries are aggregated along more useful low-dimensional directions, while the resampling trick avoids storing full perturbation directions.

    \item We provide an MSE analysis that characterizes the variance--bias trade-off of StructZO. The result shows that the ZO estimation dimension is reduced from $mn$ to $mr$, while the remaining projection bias is controlled by the spectral tail of the activation Gram matrix.

    \item Experiments on OPT and Llama3.1 models show that StructZO consistently outperforms existing ZO baselines, achieves lower training loss, scales to larger models, and remains close to FO-Adam while preserving the memory advantage of forward-only optimization. For example, StructZO improves the average score on Llama3.1-8B from $76.3$ to $82.9$ over the strongest ZO baseline, and improves CB from $73.2$ to $92.9$.
\end{itemize}

\section{Related Work}
\label{seq:related}
\paragraph{Fine-tuning LLMs.}
Fine-tuning large language models with first-order optimizers such as Adam and AdamW is effective but memory-intensive. This motivates parameter- and memory-efficient fine-tuning methods, including adapters \cite{houlsby2019parameter}, prefix tuning \cite{DBLP:conf/emnlp/LesterAC21,DBLP:conf/acl/LiL20}, LoRA \cite{DBLP:conf/iclr/HuSWALWWC22}, and low-rank optimizer-memory reduction methods such as GaLore \cite{DBLP:conf/icml/Zhao0CWAT24, DBLP:conf/icml/HeLHC025}. Although these methods reduce the number of trainable parameters or optimizer states, they still rely on backpropagation and therefore retain nontrivial activation-memory cost.

\paragraph{Zeroth-Order Optimization.}
Zeroth-order optimization estimates update directions using only objective function values and has been widely studied in adversarial learning \cite{chen2017zoo, ilyas2018black}, reinforcement learning \cite{vemula2019contrasting}, and automated ML \cite{wang2022zarts}. Recently, MeZO \cite{malladi2023fine} showed that ZO optimization can fine-tune LLMs with memory close to inference, making it attractive under limited hardware budgets. Recent works improve ZO fine-tuning through low-rank and structured perturbation \cite{DBLP:conf/iclr/ChenZCYW25,DBLP:journals/corr/abs-2601-17261,DBLP:journals/corr/abs-2602-17155,ma2025revisiting,dang2025fzoo}, variance reduction and momentum techniques \cite{yuepseuzo,DBLP:conf/icml/GautamPZRH24,DBLP:conf/emnlp/PengLWC25}, or Hessian-informed preconditioning \cite{DBLP:conf/iclr/ZhaoDYD0T25,zhao2025helene}. Together, these methods demonstrate that ZO fine-tuning can be substantially improved by exploiting gradient structure, model architecture, and carefully designed perturbation distributions.

\paragraph{Subspace Optimization.}
Subspace-based optimization exploits the observation that useful update directions in large models often concentrate in a much lower-dimensional structure than the full parameter space \cite{DBLP:conf/icml/Zhao0CWAT24, DBLP:conf/icml/HeLHC025}. In the ZO setting, low-rank updates are frequently achieved through the use of randomly sampled projection matrices combined with lazy updates \cite{yu2025zeroth,DBLP:conf/iclr/ChenZCYW25,DBLP:journals/corr/abs-2602-17155}. More closely related to our work, AGZO \cite{DBLP:journals/corr/abs-2601-17261} takes an important step by constructing activation-guided subspaces for more informative ZO perturbations. Together, these works motivate a systematic study of the interaction between activation-induced spaces, large-query signal aggregation, and spectral update conditioning in ZO optimization.
\section{Method}
\label{seq:method}
\subsection{Motivation}
First-order fine-tuning benefits from more than explicit gradient access. In practice, effective FO updates exploit informative low-dimensional geometry, aggregate derivative signals across many tokens and parameters, and apply well-conditioned update transformations. Standard ZO methods lack these properties: they perturb the full parameter space, rely on very limited directional information per step, and usually apply unconditioned updates. StructZO is designed to recover these structural benefits in a forward-only optimization pipeline by estimating updates in an informative reduced space and applying spectral shaping before lifting them back to the original parameters.

\subsection{Preliminaries}
Zeroth-Order (ZO) optimization refers to methods that update parameters without explicit gradient information. In this work, we focus on the \textit{Random Gradient Estimator} (RGE), which samples a perturbation $\bz$ matching the parameter shape and queries a “noisy function value oracle” to compute updates. RGE admits both forward difference (left) and central difference (right) variants:
\begin{equation}
     \tilde\nabla_{\mu}f(\bx)=\frac{1}{q}\sum_{i=1}^q\frac{f(\bx+\mu \bz_i)-f(\bx)}{\mu}\bz_i,\
     \tilde\nabla_{\mu}f(\bx)=\frac{1}{q}\sum_{i=1}^q\frac{f(\bx+\mu \bz_i)-f(\bx-\mu \bz_i)}{2\mu}\bz_i,
\end{equation}
where $\bz_i\sim N(\bzero,\bI)$. By Taylor expansion, when $\mu\rightarrow0$, both tend to:
\begin{equation}
    \hat{\nabla} f(\bx) = \frac{1}{q}\sum_{i=1}^q\<\nabla f(\bx), \bz_i\>\bz_i,
\end{equation}
which is an unbiased estimate of $\nabla f(\bx)$ with a variance scaling with the parameter dimension as $\Theta\left(\frac{d}{q}\right)$ and is utilized in \textit{ZO-SGD} to update parameters like $\theta_{t+1}=\theta_{t}-\eta\hat\nabla f(\theta_t)$, where $\eta$ is the learning rate. For LLM fine-tuning, $d$ is extremely large, so applying large-scale multi-query ZO estimation directly in the full parameter space becomes computationally inefficient \cite{DBLP:conf/emnlp/PengLWC25}. This motivates estimating ZO updates in a lower-dimensional but informative search space.


\begin{figure*}[tb]
  \centering
  \begin{subfigure}[t]{0.48\textwidth}
    \centering
    \includegraphics[width=\textwidth]{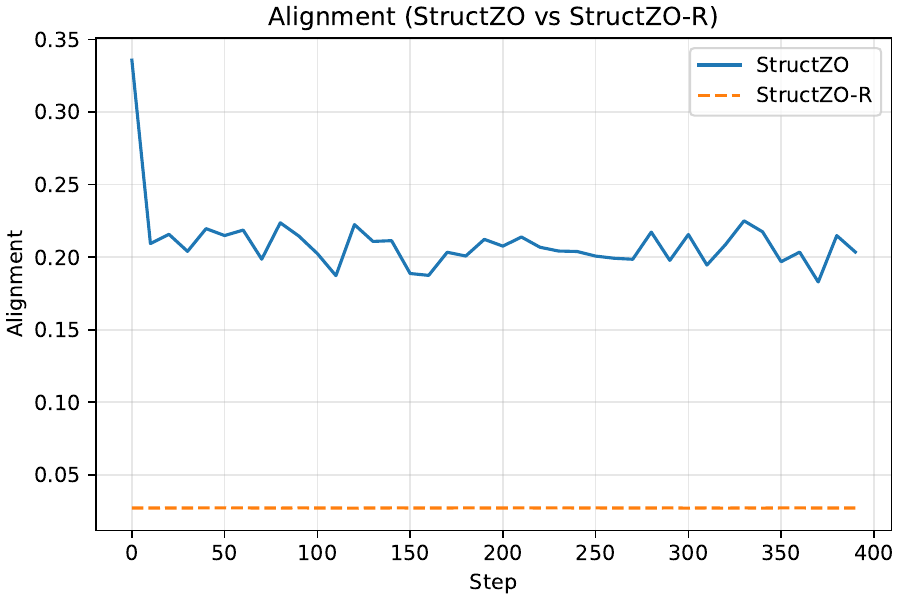}
    \caption{Alignment between $P$ and $V$}
    \label{fig:P, V alignment}
  \end{subfigure}
  \hfill
  \begin{subfigure}[t]{0.48\textwidth}
    \centering
    \includegraphics[width=\textwidth]{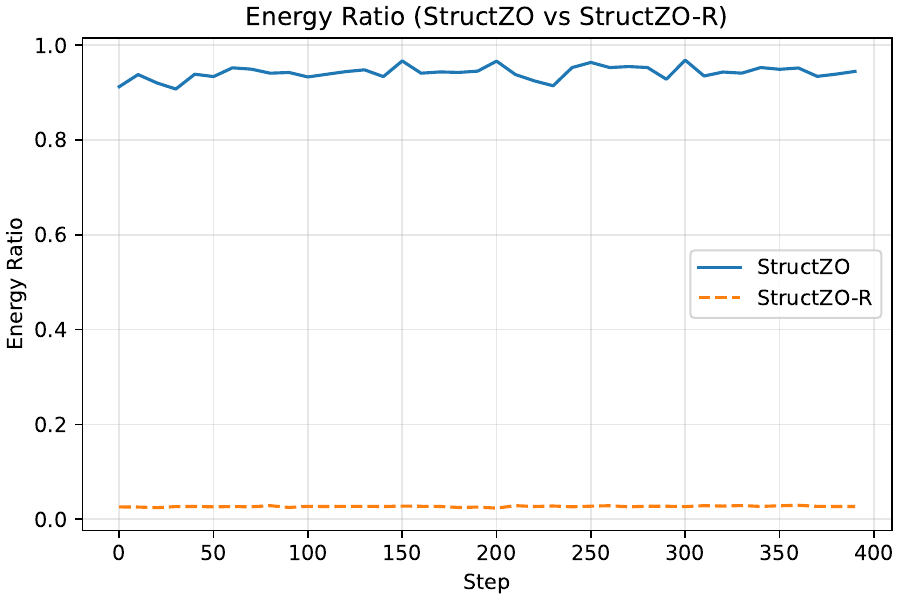}
    \caption{Energy ratio of $ GPP^\top$ in $G$}
    \label{fig:energy ratio}
  \end{subfigure}
  \hfill
  \begin{subfigure}[t]{0.48\textwidth}
    \centering
    \includegraphics[width=\textwidth]{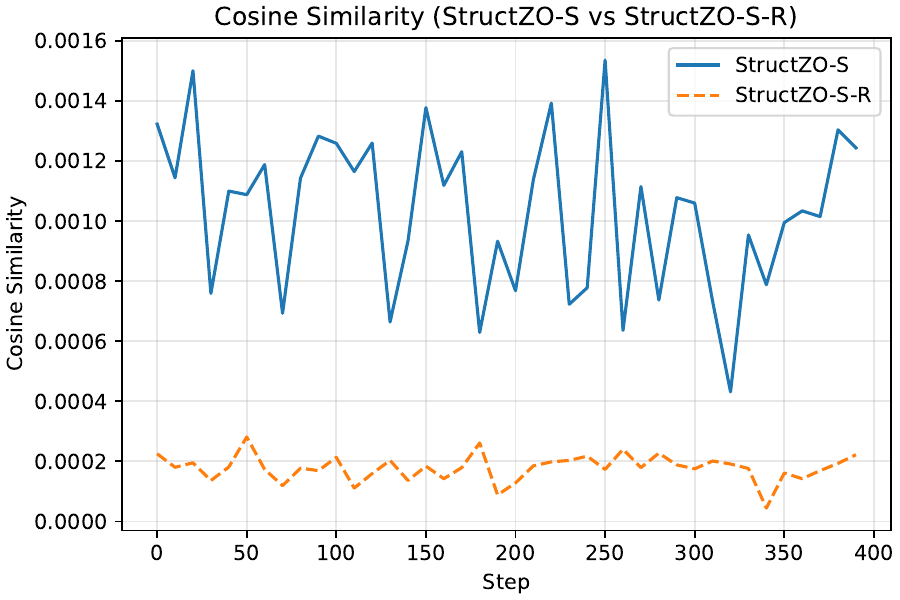}
    \caption{Cosine similarity between pre-muon $\hat G$ and $G$}
    \label{fig:cosine sim sgd}
  \end{subfigure}
  \hfill
  \begin{subfigure}[t]{0.48\textwidth}
    \centering
    \includegraphics[width=\textwidth]{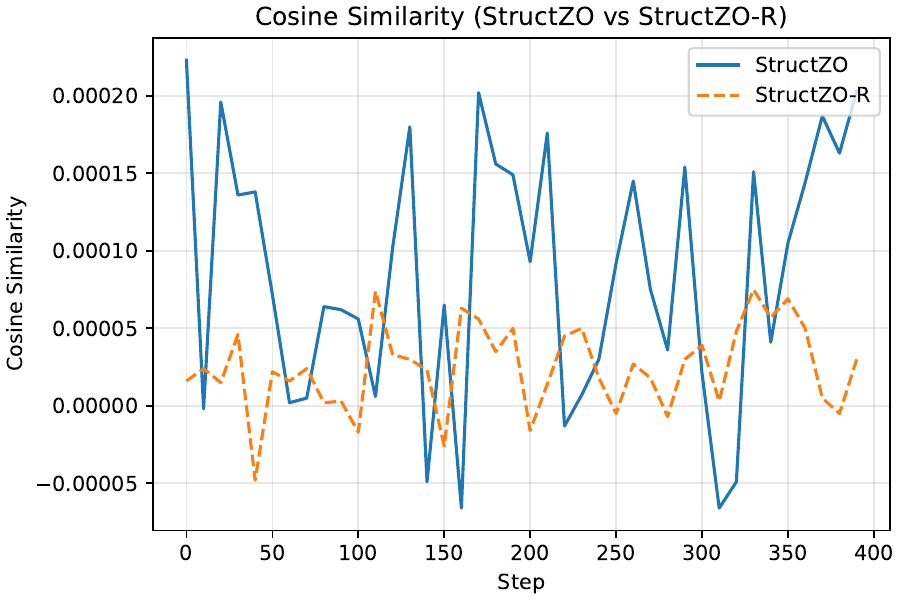}
    \caption{Cosine similarity between $\hat G$ and msign($G$)}
    \label{fig:cosine sim muon}
  \end{subfigure}
  \caption{Empirical geometry diagnostics of StructZO on SST-2, OPT1.3B. 
(a) Subspace alignment $\frac{1}{r}\|P^\top V_r\|_F^2$ between the constructed basis $P$ and the top-$r$ right singular subspace $V_r$ of the true gradient $G$. 
(b) Projected gradient energy $\|GPP^\top\|_F^2/\|G\|_F^2$. 
(c) Cosine similarity between the pre-spectral estimator and the raw gradient $G$. 
(d) Cosine similarity between the final spectrally shaped update and $\operatorname{msign}(G)$. The \textit{-R} suffix indicates that a random subspace is used to replace the original informative one, and the \textit{-S} suffix indicates that the Muon-style approach is not used.}
  \label{fig:geometry}
\end{figure*}

\subsection{StructZO}
The overall algorithm is summarized in Algorithm \ref{alg:szo}
\subsubsection{Activation-Induced Informative Subspaces}
Consider a Transformer linear layer $Y = AW^\top,$ where $A \in \mathbb{R}^{B \times n}$ denotes the input activation matrix of the current mini-batch and $W \in \mathbb{R}^{m\times n}$ is the trainable weight matrix. From back-propagation, $\nabla_WL=(\frac{\partial L}{\partial Y})^\top A:=\Delta^\top A$ and subsequently:
\begin{equation}
  (\nabla_WL)^\top \nabla_WL=  A^\top \Delta \Delta^\top A.
\end{equation}

When the output-side covariance $\Delta\Delta^\top$ is relatively isotropic or slowly varying, the right-side geometry of $\nabla_W L$ is largely governed by the activation Gram matrix $A^\top A$. This suggests that the dominant eigenspace of $A^\top A$ provides an informative coordinate system for ZO perturbations.


Instead of using isotropic random projections, StructZO therefore efficiently constructs an informative subspace directly from activations. Specifically, we sample a Gaussian sketch matrix $\Omega \in \mathbb{R}^{n \times (r+k)},$ where $r$ is the target rank and $k$ is an oversampling factor. We compute $Y = C\Omega = A^\top (A\Omega)$, followed by \textit{QR} decomposition $Y = QR$. The informative subspace basis is then defined as $\bP = Q_{[:,:r]} \in \mathbb{R}^{n \times r}$. Importantly, this construction is performed entirely from forward activations and does not require storing activations for backward propagation. The resulting projector $\Pi = \bP\bP^\top$
captures dominant activation directions and defines a reduced optimization space.
\subsubsection{Multi-Query Structured Zeroth-Order Estimation}
After constructing the informative subspace $\bP$, StructZO performs perturbations only inside the reduced coordinates. For each query, we sample $z_i \sim \mathcal{N}(0, I_{m\times r})$, and construct the lifted perturbation $\Delta_i = z_i P^\top$. Consequently, the aggregated gradient is:
\begin{equation}
\frac{1}{q}
\sum_{i=1}^q
\frac{
f(W+\mu z_iP^\top)-f(W)
}{\mu}
z_iP^\top.    
\end{equation}
Compared with full-space Gaussian ZO, the perturbation dimension is reduced from $mn$ to $mr$, significantly lowering estimator variance when $r \ll n$. Unlike random low-dimensional perturbations, the reduced coordinates here are constructed from the current forward activations and are therefore more aligned with optimization-relevant directions, achieving better query efficiency.

\subsubsection{Spectral Shaping via Reduced-Space Muon}
Although reduced-space perturbations substantially reduce estimation variance, the resulting updates may still suffer from anisotropic singular directions. To further stabilize optimization, we apply Muon-style spectral shaping inside the reduced coordinates and our final update direction is:
\begin{equation}
    O=\operatorname{msign}
(\frac{1}{q}\sum_{i=1}^q
\frac{L(W+\mu z_iP^\top)-L(W)}{\mu}
z_i)P^\top,
\end{equation}
where $\operatorname{msign}(G)=UV^\top$ for the SVD $G=U\Sigma V^\top$.
\begin{prop}
    \label{prop:sub+muon}
    For a matrix $G$ and its \textit{SVD} decomposition $G=U\Sigma V^\top$, if $P=V$ then $\operatorname{msign}(GP)P^\top=\operatorname{msign}(G)$.
\end{prop}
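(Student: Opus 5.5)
We compute directly from the SVD. Write $G=U\Sigma V^\top$ as the compact (thin) SVD, so $U$ and $V$ have orthonormal columns and $\Sigma$ is square, diagonal and positive. The hypothesis $P=V$ means $P$ is exactly this right singular basis. With this convention $\operatorname{msign}(G)=UV^\top$ is well defined: it does not depend on the particular choice of singular vectors, since it equals $G(G^\top G)^{+1/2}$.

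First, since $V^\top V=\bI$, we have
\begin{equation*}
GP=U\Sigma V^\top V=U\Sigma .
\end{equation*}

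Second, we identify an SVD of $U\Sigma$. The matrix $U$ has orthonormal columns, $\Sigma$ is diagonal with positive entries, and $\bI$ is orthogonal. Hence $U\Sigma=U\Sigma\,\bI^\top$ is itself a valid (thin) SVD, and therefore $\operatorname{msign}(GP)=U\bI^\top=U$.

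Third, lifting back gives
\begin{equation*}
\operatorname{msign}(GP)P^\top=UV^\top=\operatorname{msign}(G),
\end{equation*}
which is the claim.

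The only step needing care is the second: $\operatorname{msign}$ must be well defined, independent of the SVD chosen. When singular values repeat, $U$ and $V$ are not unique. We handle this through the basis-free formula
\begin{equation*}
\operatorname{msign}(M)=M(M^\top M)^{+1/2}.
\end{equation*}
For $M=U\Sigma$ we get $M^\top M=\Sigma^2$, so $\operatorname{msign}(M)=U\Sigma\Sigma^{-1}=U$, independently of any choice.

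If instead a full SVD with zero singular values is intended, we restrict to the rank-$r$ part. The zero singular directions contribute nothing to $UV^\top$ under the pseudo-inverse convention, so the identity is unchanged.

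Finally, the statement extends in the same way to any $P=VR$ with $R$ orthogonal, and to $P$ spanning exactly the row space of $G$. In both cases $PP^\top=VV^\top$, and the same computation goes through.
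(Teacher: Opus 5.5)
Your proof is correct and takes the same route as the paper: compute $GP=U\Sigma$, evaluate the polar formula $\operatorname{msign}(X)=X(X^\top X)^{-1/2}$ at $U\Sigma$ to get $U$, and multiply by $P^\top=V^\top$. Your thin-SVD and pseudo-inverse conventions also make the rank-deficient case precise, which the paper handles only by asserting $\Sigma(\Sigma^\top\Sigma)^{-1/2}=I$ ``on the singular subspace''.
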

This operation equalizes active singular directions inside the informative subspace, producing well-conditioned updates analogous to spectrally normalized FO optimization. Moreover, according to Proposition \ref{prop:sub+muon}, StructZO does not merely approximate the raw gradient $G$; instead, it approximately reconstructs the spectrally conditioned update direction $\mathrm{msign}(G)$, which is typically more stable and optimization-effective than directly estimating $G$ itself.
\subsubsection{Empirical Geometry Diagnostics}
We further provide empirical diagnostics to validate whether the informative basis constructed from forward activations captures useful gradient geometry. In Figure~\ref{fig:geometry}, we compute true FO gradients only for analysis. Although the constructed basis $P$ does not exactly match the top right singular subspace $V_r$ of the true gradient, it shows substantially stronger alignment than a random basis. The projected energy ratio $\|GPP^\top\|_F^2/\|G\|_F^2$ is also much higher, indicating that the reduced space preserves a large fraction of gradient energy. Before spectral shaping, the structured estimator is closer to the raw gradient $G$ than its random-space counterpart. After spectral shaping, the final StructZO update is more aligned with the spectrally conditioned direction $\operatorname{msign}(G)$. These results support our design choice: the forward-induced basis provides a useful search space, while spectral shaping changes the optimization target from raw gradient approximation to a better-conditioned update direction.
\begin{algorithm}[h]
    \caption{StructZO}
    \label{alg:szo}
    \begin{algorithmic}[1]
        \Require Parameters $\{\boldsymbol{W_l}\}_{l=1}^L$, loss $L$, learning rate $\eta$, max iterations $T$, smoothing coefficient $\mu>0$, query number $q$, rank $r$, over-sample factor $k$ 
        \State\textbf{for} $t=0$ \textbf{to} $T$ \textbf{do}           \State\quad Sample base seed $s$ and batch $\mathcal{B}_t$
            \State\quad $\ell_0\leftarrow L(\{\boldsymbol{W_l}\}_{l=1}^L;\mathcal{B}_t)$  \quad \text{with activations}
            \State\quad Obtain informative subspace $\{P_l\}_{l=1}^L\leftarrow$ ExtractSubspace($\{A_l\}_{l=1}^L$, $r$, $k$) on-the-fly
            \State\quad\textbf{for} $i=1$ \textbf{to} $q$ \textbf{do}
                \State\qquad $s_i\leftarrow s+i$
                \State\qquad PerturbParameters($\{\boldsymbol{W_l}\}_{l=1}^L$, $\{P_l\}_{l=1}^L$, $\mu$, $s_i$)
                \State\qquad $\ell_i\leftarrow L(\{\boldsymbol{W_l}\}_{l=1}^L;\mathcal{B}_t)$
                \State\qquad PerturbParameters($\{\boldsymbol{W_l}\}_{l=1}^L$, $\{P_l\}_{l=1}^L$, $-\mu$, $s_i$)
            \State\quad\textbf{end for}
            \State\quad Initialize $\{g_l\}_{l=1}^L\leftarrow\textbf{0}$
            \State\quad\textbf{for} $i=1$ \textbf{to} $q$
            \textbf{do}
                \State\qquad Resample $\{z_l\}_{l=1}^L\leftarrow\text{Generator}(s_i)$
                \State\qquad $\{g_l\}_{l=1}^L\leftarrow\{g_l\}_{l=1}^L+\frac{\ell_i-\ell_0}{\mu}\{z_l\}_{l=1}^L$
            \State\quad\textbf{end for}
            \State\quad $\{\boldsymbol{W}_l\}_{l=1}^L\leftarrow\{\boldsymbol{W}_l-\eta \textbf{msign}(g_l)P_l^\top\}_{l=1}^L$
        \State\textbf{end for}
        \State \textbf{Function} ExtractSubspace($A$, $r$, $k$):
        \State\quad Let $d=A.\text{Shape[1]}$, Sample Gaussian sketch matrix $\Omega\in\mathbb{R}^{d\times (r+k)}$
        \State\quad $Y=A^\top (A\Omega)$, Perform \textit{QR} decomposition $Y=QR$
        \State\quad \textbf{Return} $P=Q[:,:r]$ \Comment{First $r$ columns of $Q$}
        \State \textbf{Function} PerturbParameters($\{\boldsymbol{W_l}\}_{l=1}^L$, $\{P_l\}_{l=1}^L$, $\mu$, $s$):
        \State\quad Reset random number generator seed $s$
        \State\quad \textbf{for} $l=1$ \textbf{to} $L$ \textbf{do}
            \State\qquad Sample low-rank Perturbation $z_l$
            \State\qquad $\boldsymbol{W_l} \leftarrow \boldsymbol{W_l}+\mu z_lP_l^\top$
        \State\quad\textbf{end for}
    \end{algorithmic}
\end{algorithm}

\subsection{MSE Analysis of StructZO}
We analyze the pre-spectral estimator to quantify the effect of informative reduced coordinates and multi-query aggregation. Consider one weight matrix $W\in\mathbb{R}^{m\times n}$ and stochastic loss $F_\xi(W)$, where $\xi$ denotes mini-batch randomness. Let
\[
f(W)=\mathbb{E}_{\xi}[F_\xi(W)],\qquad
G=\nabla f(W),\qquad
G_\xi=\nabla F_\xi(W).
\]
For each mini-batch, StructZO constructs $P_\xi\in\mathbb{R}^{n\times r}$ from the activation Gram matrix $A_\xi^\top A_\xi$ using the randomized range finder described above, and defines $\Pi_\xi=P_\xi P_\xi^\top$. Unlike information-agnostic random subspace methods, here $\Pi_\xi$ is
constructed from the same mini-batch activation $A_\xi$ and is therefore
statistically coupled with $\xi$. Thus, instead of enforcing
$\mathbb{E}[\Pi_\xi]=cI_n$, we directly analyze the coupled expectation
over $(\xi,\Omega)$. Given $q$ independent perturbations $z_i\sim\mathcal{N}(0,I_{m\times r})$, the pre-spectral estimator is
\[
\widehat{G}
=
\frac{1}{q}\sum_{i=1}^{q}
\frac{F_\xi(W+\mu z_iP_\xi^\top)-F_\xi(W)}{\mu}
z_iP_\xi^\top .
\]

\begin{theorem}[MSE bound for StructZO]
\label{thm:mse bound StructZO}
Under the assumptions \ref{ass:sigma^2}, \ref{ass:proj align}, \ref{ass:act gram} specified in Appendix~\ref{appendix:proof} and standard smoothness conditions on
$F_\xi$, the pre-Muon multi-query estimator satisfies
\[
\begin{aligned}
\mathbb{E}_{\xi,\Omega,z}\|\widehat G-G\|_F^2
&\le
\underbrace{
\frac{mr+1}{q}
\mathbb{E}_{\xi}\|G_\xi\|_F^2
}_{\text{query-induced variance}}
+
\underbrace{
\beta_G\alpha_{r,k}
\mathbb{E}_{\xi}
\sum_{j>r}\lambda_j(A_\xi^\top A_\xi)
}_{\text{subspace approximation bias}}
\\
&\quad+
\underbrace{\sigma^2}_{\text{stochastic noise}}
+
\underbrace{O(\mu^2)}_{\text{smoothing error}}.
\end{aligned}
\]
\end{theorem}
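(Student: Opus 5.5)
The plan is to decompose the error $\widehat G-G$ into four pieces matching the four terms of Theorem~\ref{thm:mse bound StructZO}, and to bound each separately. First, Taylor-expand $F_\xi(W+\mu z_iP_\xi^\top)$ using the (assumed) $L$-smoothness of $F_\xi$. This gives
\[
\frac{F_\xi(W+\mu z_iP_\xi^\top)-F_\xi(W)}{\mu}=\langle G_\xi P_\xi, z_i\rangle + O(\mu\|z_i\|_F^2).
\]
Hence $\widehat G=\widetilde G+E_\mu$, where $\widetilde G=\frac1q\sum_i\langle G_\xi P_\xi,z_i\rangle z_iP_\xi^\top$. The remainder satisfies $\mathbb{E}\|E_\mu\|_F^2=O(\mu^2)$ by Gaussian fourth/sixth moment bounds, which are finite since $z_i$ lives in dimension $mr$. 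The cross terms involving $E_\mu$ can be absorbed with Young's inequality, at the price of constants or a lower-order $O(\mu)$ term that we fold into the smoothing error.

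Next, I condition on $(\xi,\Omega)$, so that $P_\xi$ is fixed and has orthonormal columns, and treat the $z$-randomness. Since $\mathbb{E}_z[\langle M,z\rangle z]=M$ for $M=G_\xi P_\xi\in\mathbb{R}^{m\times r}$, we get $\mathbb{E}_z\widetilde G=G_\xi P_\xi P_\xi^\top=G_\xi\Pi_\xi$. The standard Gaussian identity $\mathbb{E}\|\langle M,z\rangle z\|_F^2=(mr+2)\|M\|_F^2$ then gives
\[
\mathbb{E}_z\|\widetilde G-G_\xi\Pi_\xi\|_F^2=\frac{1}{q}\bigl((mr+2)-1\bigr)\|G_\xi\Pi_\xi\|_F^2=\frac{mr+1}{q}\|G_\xi\Pi_\xi\|_F^2\le \frac{mr+1}{q}\|G_\xi\|_F^2 .
\]
Here I use $\|XP^\top\|_F=\|X\|_F$ for orthonormal $P$, and that $\Pi_\xi$ is a contraction. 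Because the $z$-noise has conditional mean zero, it is orthogonal in $L^2$ to everything measurable in $(\xi,\Omega)$. This yields the exact split
\[
\mathbb{E}\|\widetilde G-G\|_F^2=\mathbb{E}\|\widetilde G-G_\xi\Pi_\xi\|_F^2+\mathbb{E}\|G_\xi\Pi_\xi-G\|_F^2 .
\]

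It remains to bound $\mathbb{E}\|G_\xi\Pi_\xi-G\|_F^2$. Write
\[
G_\xi\Pi_\xi-G=(G_\xi-G)-G_\xi(I-\Pi_\xi).
\]
The first part is controlled by the variance assumption~\ref{ass:sigma^2}, $\mathbb{E}\|G_\xi-G\|_F^2\le\sigma^2$. For the second part I use $G_\xi=\Delta_\xi^\top A_\xi$, so that
\[
\|G_\xi(I-\Pi_\xi)\|_F^2=\operatorname{tr}\bigl((I-\Pi_\xi)A_\xi^\top\Delta_\xi\Delta_\xi^\top A_\xi(I-\Pi_\xi)\bigr)\le \|\Delta_\xi\|_2^2\,\|A_\xi(I-\Pi_\xi)\|_F^2 .
\]
The projection-alignment assumption~\ref{ass:proj align} should supply the constant $\beta_G$, bounding the output-side factor. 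The randomized range finder guarantee (Halko--Martinsson--Tropp) for $Y=A_\xi^\top A_\xi\Omega$ with oversampling $k$ gives
\[
\mathbb{E}_\Omega\|A_\xi(I-\Pi_\xi)\|_F^2=\mathbb{E}_\Omega\operatorname{tr}\bigl((I-\Pi_\xi)A_\xi^\top A_\xi\bigr)\le\alpha_{r,k}\sum_{j>r}\lambda_j(A_\xi^\top A_\xi),
\]
with $\alpha_{r,k}$ of the form $1+r/(k-1)$. Assumption~\ref{ass:act gram} presumably justifies applying this to the Gram matrix and truncating to rank $r$.

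The main obstacle is combining the stochastic-noise and bias terms without a factor $2$. The two terms are correlated because $\Pi_\xi$ depends on $\xi$, so a clean sum $\sigma^2+\beta_G\alpha_{r,k}\mathbb{E}\sum_{j>r}\lambda_j$ needs care. I would first try to use the Pythagorean identity $\|G_\xi\Pi_\xi-G\|^2=\|(G_\xi-G)\Pi_\xi\|^2+\|G(I-\Pi_\xi)\|^2$, valid row-wise since $\Pi_\xi$ and $I-\Pi_\xi$ are orthogonal projectors. This gives $\|(G_\xi-G)\Pi_\xi\|^2\le\|G_\xi-G\|^2$, contributing $\sigma^2$. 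The remaining term $\|G(I-\Pi_\xi)\|^2$ would then be bounded via assumption~\ref{ass:proj align}, which presumably is stated exactly as $\mathbb{E}\|G(I-\Pi_\xi)\|_F^2\le\beta_G\,\mathbb{E}_\Omega\operatorname{tr}((I-\Pi_\xi)A_\xi^\top A_\xi)$, followed by the range-finder bound. If the assumptions do not allow this, I would accept a constant factor and absorb it into $\beta_G$.
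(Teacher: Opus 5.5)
After the detour through $G_\xi=\Delta_\xi^\top A_\xi$, which the paper uses only to motivate Assumption~\ref{ass:proj align} and not in the proof, your route is exactly the paper's: linearize to $\widetilde G$, condition on $(\xi,\Omega)$ to get mean $G_\xi\Pi_\xi$ and variance $\frac{mr+1}{q}\|G_\xi P_\xi\|_F^2$ with a vanishing cross term, split $G_\xi\Pi_\xi-G=(G_\xi-G)\Pi_\xi-G(I-\Pi_\xi)$ orthogonally, and bound the three pieces by Assumptions~\ref{ass:sigma^2}, \ref{ass:proj align} (stated pointwise in $\xi$, so your use of it is valid) and \ref{ass:act gram}. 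The one shortfall is the smoothing term: absorbing $2\mathbb{E}\langle\widetilde G-G,E_\mu\rangle$ by Young's inequality gives only $O(\mu)$ or inflated leading constants, and under bare $L$-smoothness an order-$\mu$ excess can genuinely occur, so reaching the stated $O(\mu^2)$ needs a Lipschitz-Hessian condition plus the observation that the order-$\mu$ part of this cross term is an odd Gaussian moment and vanishes---a step the paper itself only asserts.
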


\begin{remark}[Interpretation]
The bound contains two key terms. The first term is the multi-query ZO
estimation variance inside the reduced space. Compared with full-space
Gaussian ZO, its dimension factor changes from $mn$ to $mr$, yielding a
variance reduction controlled by $r/n$ when $r\ll n$. The second term is
the activation-subspace projection bias. It is controlled by the spectral
tail of the activation Gram matrix $A_\xi^\top A_\xi$. Therefore, when the
activation spectrum has low effective rank, the informative
subspace can keep most useful gradient directions while avoiding
full-dimensional zeroth-order perturbations.
\end{remark}
\section{Experiments}
\label{seq:exp}
\paragraph{Overview.}
We evaluate StructZO on autoregressive LLMs across classification, multiple-choice, and generation tasks from GLUE \cite{wang2018glue} and SuperGLUE \cite{wang2019superglue}, comparing with representative ZO baselines, including MeZO \cite{malladi2023fine}, LOZO \cite{DBLP:conf/iclr/ChenZCYW25}, and HiZOO/HiZOO-L \cite{DBLP:conf/iclr/ZhaoDYD0T25}, as well as FO-Adam, on OPT-1.3B and Llama3.1-8B, and further test scalability on OPT-13B and OPT-30B. We report downstream performance, loss convergence, ablations, memory usage, and wall-clock time. Empirically, StructZO consistently outperforms existing ZO methods and substantially narrows the gap to FO-Adam. Its advantage comes from jointly leveraging informative subspaces, multi-query signal aggregation, and spectral shaping, which yields faster and deeper loss reduction while retaining the low-memory benefit of forward-only ZO optimization.

\paragraph{Setup.}
We implement all methods in the same training pipeline with identical data preprocessing, prompt templates, batching, and evaluation protocols. Following prior ZO fine-tuning work \cite{malladi2023fine,DBLP:conf/iclr/ChenZCYW25,DBLP:conf/iclr/ZhaoDYD0T25}, each task uses 1,024 training examples, 512 development examples, and 1,024 evaluation examples; for small datasets such as CB and COPA, we reserve 100 examples for validation and use the rest for training. Unless otherwise specified, all ZO methods use batch size 16. Single-query ZO baselines are trained for \textbf{20K} steps, while StructZO uses $q=100$ queries and is trained for \textbf{400} steps, giving a comparable forward-query scale but richer signal aggregation per update. FO-Adam is trained for 5 epochs with smaller task-dependent batch sizes when necessary. Hyperparameters are selected on the development set, and all experiments are conducted on a single NVIDIA H20 GPU with 96 GiB memory. Additional details are in Appendix~\ref{app:experiment}.
\subsection{Main Results}
\begin{table}[t]
  \centering
  \caption{Evaluation on OPT-1.3B and Llama3.1-8B using 1,024 training and 512 validation samples per task (for CB and COPA we reserve 100 validation samples and use the remainder for training). With the exception of the DROP task, where F1 is used, all other figures refer to accuracy (\%). Each model block lists zeroth-order (ZO) methods first, followed by the first-order (FO) baseline separated by a rule. \textbf{Bold} entries indicate the best score among ZO methods; FO rows are provided for reference.}
  \resizebox{\textwidth}{!}{
  \begin{tabular}{llrrrrrrrrrr}
    \toprule
    \textbf{Model} & \textbf{Method} & \textbf{SST-2} & \textbf{RTE} & \textbf{CB} & \textbf{BoolQ}  & \textbf{WiC} & \textbf{MultiRC} & \textbf{COPA} & \textbf{ReCoRD} & \textbf{DROP} & \textbf{AVG} \\
    \multicolumn{2}{l}{\textbf{Task type}} 
      & \multicolumn{6}{c}{\hrulefill classification \hrulefill} 
      & \multicolumn{2}{c}{\hrulefill multiple choice \hrulefill} 
      & \multicolumn{1}{c}{generation} 
      & \multicolumn{1}{c}{} \\
    \midrule
    \multirow{5}{*}{\textbf{OPT-1.3B}}
      & MeZO                & 88.8 & 66.4 & 67.8 & 63.4  & 57.9 & 56.2 & 73.0 & 70.7 & 22.4 & 63.0 \\
      & LOZO               & 89.4 & 61.3 & 67.8 & 60.8  & 55.1 & 58.6 & 73.0 & 71.2 & 21.9 & 62.1 \\
      & HiZOO & 89.6 & 61.7 & 67.8 & 63.1 & 60.2 & 58.9 & 73.0 & \textbf{71.8} & 22.5 & 63.2\\
      & StructZO                & \textbf{92.4} & \textbf{69.0} & \textbf{71.4} & \textbf{68.5}  & \textbf{62.6} & \textbf{63.9} & \textbf{74.0} & \textbf{71.8} & \textbf{26.7} & \textbf{66.7} \\
      \cmidrule(lr){2-12}
      & FO-Adam & 93.0 & 70.3 & 71.4 & 69.4  & 63.0 & 64.2 & 74.0 & 71.7 & 27.7 & 67.2 \\
    \midrule
    \midrule
    \multirow{6}{*}{\textbf{Llama3.1-8B}}
      & MeZO                & 90.2 & 70.7 & 69.6 & 79.1  & 58.7 & 78.1 & 90.0 & \textbf{84.9} & 62.2 & 75.9 \\
      & LOZO             & 91.3 & 70.7 & 73.2 & 80.2  & 58.7 & 73.8 & \textbf{91.0} & \textbf{84.9} & 62.7 & 76.3 \\
      & HiZOO-L & 90.2 & 71.2 & 71.4 & 79.6 & 55.1 & 77.8 & 90.0 & 84.8 & 58.5 & 75.4\\
      & StructZO                & \textbf{93.7} & \textbf{84.5} & \textbf{92.9} & \textbf{84.1}  & \textbf{66.8} & \textbf{83.2} & \textbf{91.0} & 84.8 & \textbf{65.5} & \textbf{82.9} \\
      \cmidrule(lr){2-12}
      & FO-Adam           & 94.7 & 88.0 & 91.0 & 87.9  & 70.4 & 82.5 & 91.0 & 84.0 & 67.2 & 84.1 \\
    \midrule
    \bottomrule
  \end{tabular}
  }
  \label{tab:main}
\end{table}
\begin{table}[t]
  \centering
  \renewcommand{\arraystretch}{0.85}
  \resizebox{0.7\textwidth}{!}{
  \begin{tabular}{llrrrrr}
    \toprule
    \textbf{Model} & \textbf{Method} & \textbf{SST-2} & \textbf{RTE} & \textbf{BoolQ}  & \textbf{WiC} & \textbf{AVG} \\
    \midrule
    \multirow{4}{*}{\textbf{OPT-13B}}
      & MeZO & 91.3 & 67.5 & 68.0 & 59.5 & 71.6  \\
      & LOZO & 91.7 & 70.4 & 71.9 & 60.8 & 73.7  \\
      & HiZOO-L & 92.1 & 68.2 & 66.6 & 59.4 & 71.6 \\
      & StructZO & \textbf{93.7} & \textbf{72.3} & \textbf{78.8} & \textbf{64.0} & \textbf{77.2} \\
    \midrule
    \multirow{4}{*}{\textbf{OPT-30B}}
     & MeZO & 90.7 & 67.1 & 67.4 & 56.3 & 70.4  \\
      & LOZO & 92.0 & 65.3 & 72.3 & 57.2 & 71.7  \\
      & HiZOO-L & 90.3 & 69.3 & 68.4 & 58.6 & 71.7 \\
      & StructZO & \textbf{93.2} & \textbf{79.5} & \textbf{78.7} & \textbf{60.4} & \textbf{78.0} \\
    \bottomrule
  \end{tabular}
  }
  \caption{Full-parameter ZO fine-tuning results on OPT-13B and OPT-30B.}
  \label{tab:OPT30B}
\end{table}

\begin{figure*}[tb]
  \centering
  \begin{subfigure}[t]{0.46\textwidth}
    \centering
    \includegraphics[width=\textwidth]{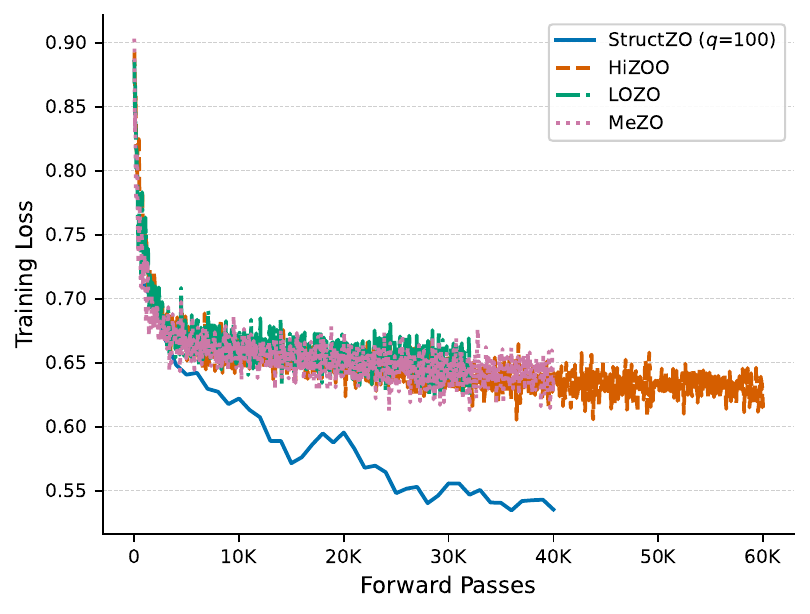}
    \caption{Training loss versus forward passes on BoolQ, OPT1.3B.}
    \label{fig:BoolQ_loss}
  \end{subfigure}
  \hfill
  \begin{subfigure}[t]{0.46\textwidth}
    \centering
    \includegraphics[width=\textwidth]{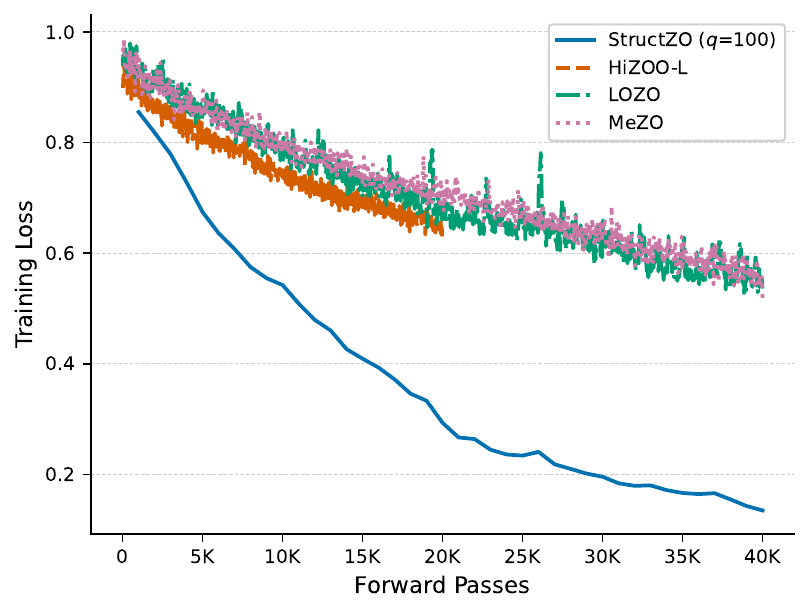}
    \caption{Training loss versus forward passes on CB, Llama3.1-8B.}
    \label{fig:CB_loss}
  \end{subfigure}

  \caption{Training loss versus forward passes for (a) BoolQ on OPT1.3B and (b) CB on Llama3.1-8B, comparing MeZO, LOZO ,HiZOO(-L) and StructZO.}
  \label{fig:loss curve}
\end{figure*}
Table~\ref{tab:main} reports the main results on OPT-1.3B and Llama3.1-8B. StructZO consistently achieves the best performance among all ZO methods across classification, multiple-choice, and generation tasks, with up to 8.7\% average improvement over the strongest ZO baseline and only about 1.5\% average gap to FO-Adam. The gains are particularly pronounced on challenging tasks such as CB on Llama3.1-8B, where StructZO improves over the best ZO baseline by 26.9\%, showing that the proposed structured update space can substantially narrow the ZO--FO performance gap. Figure~\ref{fig:loss curve} further compares training loss under the same forward-pass budget: StructZO not only reduces loss faster than MeZO, LOZO, and HiZOO/HiZOO-L, but also converges to a lower final loss, indicating stronger optimization effectiveness rather than merely better checkpoint selection. Finally, Table~\ref{tab:OPT30B} extends the comparison to OPT-13B and OPT-30B, where StructZO remains consistently superior to existing ZO baselines. Overall, these results suggest that StructZO scales favorably with model size and provides a practical forward-only alternative to FO fine-tuning.
\subsection{Ablation Study}
\begin{table}[t]
  \centering
  \caption{Ablation study of StructZO on OPT-1.3B and Llama3.1-8B across classification, multiple-choice, and generation tasks. StructZO-R replaces the activation-induced subspace with a random subspace, while StructZO-S replaces Muon-style spectral shaping with SGD-style updates.}
  \resizebox{\textwidth}{!}{
  \begin{tabular}{llrrrrrrrrrr}
    \toprule
    \textbf{Model} & \textbf{Method} & \textbf{SST-2} & \textbf{RTE} & \textbf{CB} & \textbf{BoolQ}  & \textbf{WiC} & \textbf{MultiRC} & \textbf{COPA} & \textbf{ReCoRD} & \textbf{DROP} & \textbf{AVG} \\
    \multicolumn{2}{l}{\textbf{Task type}} 
      & \multicolumn{6}{c}{\hrulefill classification \hrulefill} 
      & \multicolumn{2}{c}{\hrulefill multiple choice \hrulefill} 
      & \multicolumn{1}{c}{generation} 
      & \multicolumn{1}{c}{} \\
    \midrule
    \multirow{3}{*}{\textbf{OPT-1.3B}}
      & StructZO-R & 91.7 & 59.5 & 71.4 & 64.3  & 56.1 & 59.0 & 74.0 & 72.6 & 22.8 & 63.5 \\
      & StructZO-S              & 91.8 & 62.0 & 67.8 & 63.2 & 57.5 & 60.2 & 72.0 & 72.0 & 25.3 & 63.5 \\
      & StructZO                & 92.4 & 69.0 & 71.4 & 68.5  & 62.6 & 63.9 & 74.0 & 71.8 & 26.7 & 66.7 \\
    \midrule
    \midrule
    \multirow{3}{*}{\textbf{Llama3.1-8B}}
      & StructZO-R & 88.3 & 66.7 & 73.2 & 77.7  & 62.5 & 77.1 & 90.0 & 86.5 & 62.4 & 76.0 \\
      & StructZO-S              & 92.8 & 76.1 & 75.0 & 84.0  & 61.5 & 73.0 & 90.0 & 86.4 & 57.3 & 77.3 \\
      & StructZO                & 93.7 & 84.5 & 92.9 & 84.1  & 66.8 & 83.2 & 91.0 & 84.8 & 65.5 & 82.9 \\
    \midrule
    \bottomrule
  \end{tabular}
  }
  \label{tab:variants}
\end{table}

\begin{figure*}[tb]
  \centering
  \begin{subfigure}[t]{0.46\textwidth}
    \centering
    \includegraphics[width=\textwidth]{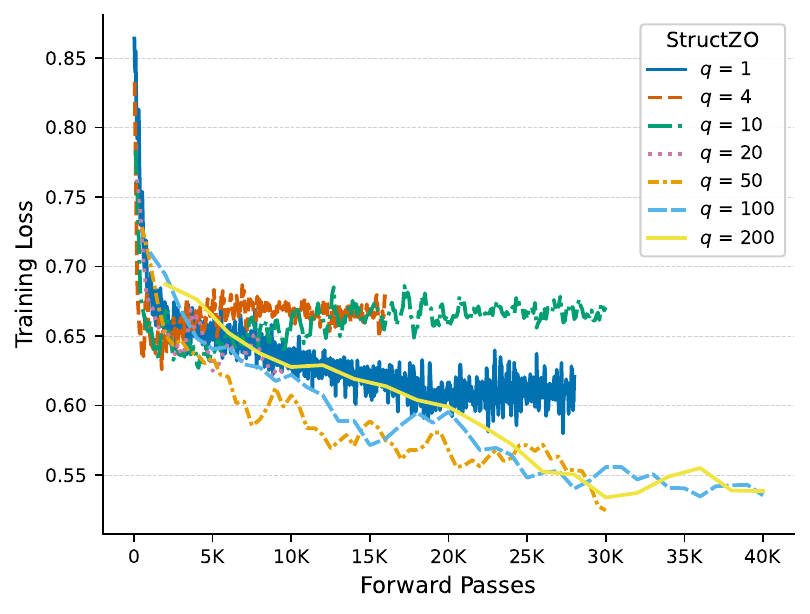}
    \caption{Training loss versus forward passes on BoolQ, OPT1.3B.}
    \label{fig:different q BoolQ}
  \end{subfigure}
  \hfill
  \begin{subfigure}[t]{0.46\textwidth}
    \centering
    \includegraphics[width=\textwidth]{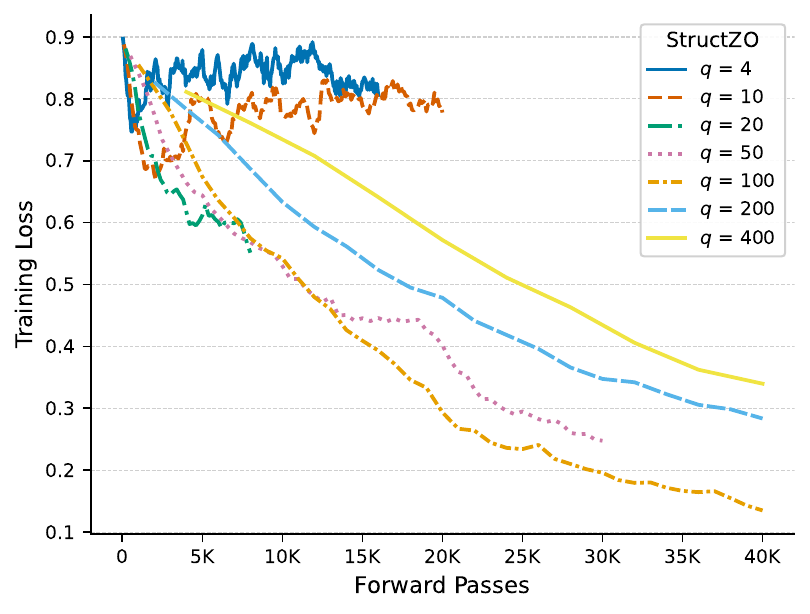}
    \caption{Training loss versus forward passes on CB, Llama3.1-8B.}
    \label{fig:different q CB}
  \end{subfigure}

  \caption{Training loss versus forward passes for (a) BoolQ on OPT1.3B and (b) CB on Llama3.1-8B, comparing different query number $q$.}
  \label{fig:different q}
\end{figure*}
Table~\ref{tab:variants} studies two ablated variants of StructZO: StructZO-R uses a random reduced space, while StructZO-S replaces spectral shaping with SGD-style updates. The full StructZO consistently outperforms both variants on OPT-1.3B and Llama3.1-8B. The clear drop from removing either part indicates that the effectiveness of StructZO comes from their joint interaction rather than from any single isolated component. Figure~\ref{fig:different q} further examines the effect of the query number $q$. Increasing $q$ leads to faster and lower loss reduction, suggesting that once queries are collected in a more informative reduced space, larger query budgets can be effectively converted into more accurate update directions, similar in spirit to the signal aggregation in FO optimization. Notably, although the SGD-style variant can be closer to the raw gradient $G$ shown in Figure ~\ref{fig:geometry}, it still lags behind the full StructZO, suggesting that the spectrally conditioned direction, rather than raw gradient approximation alone, is crucial for effective optimization.
\subsection{Memory and Wall-Clock Efficiency}

\paragraph{Memory overhead.}
Figure~\ref{fig:memory} compares peak GPU memory usage. StructZO avoids backpropagation, backward activation storage, and Adam optimizer states, so its only major extra memory over MeZO is the informative basis $P \in \mathbb{R}^{n \times r}$ for each weight matrix $W \in \mathbb{R}^{m \times n}$, costing $O(rn)$ storage. With the resampling trick, perturbations and reduced-space estimates are regenerated from random seeds, avoiding storage of all $q$ query directions. Therefore, StructZO adds only modest memory overhead compared with MeZO, avoids the per-parameter curvature states used by HiZOO-style methods, and still achieves up to $11.2\times$ memory reduction over FO-Adam.
\paragraph{Wall-clock time.}
Table \ref{tab:time} reports the wall-clock training time. Since StructZO uses multi-query estimation with $q=100$, each optimization step is slower than single-query ZO baselines. However, each step also aggregates richer directional information in an informative reduced space, allowing StructZO to use far fewer optimization steps. Despite its higher per-step cost, StructZO achieves lower or comparable total training time than MeZO and LOZO, and is substantially faster than HiZOO/HiZOO-L. Overall, StructZO trades higher per-step cost for much fewer and more effective updates, resulting in a favorable end-to-end efficiency.

\begin{figure}[t]
    \centering
    \renewcommand{\arraystretch}{2}
    \begin{minipage}[b]{0.45\textwidth}
        \includegraphics[width=\linewidth]{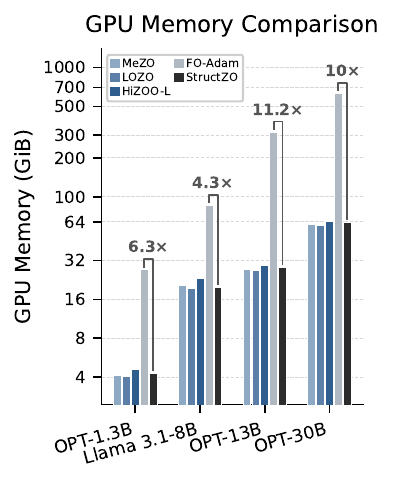}
        \caption{Memory overhead for different models under various ZO algorithms and FO-Adam. Details are shown in Appendix ~\ref{app:experiment}.}
        \label{fig:memory}
    \end{minipage}
    \hfill
    \begin{minipage}[b]{0.45\textwidth}
    \resizebox{\textwidth}{!}{
        \begin{tabular}{lllll}
        \toprule
        \textbf{Task} & \textbf{Method} & \textbf{Time / step (s)} & \textbf{Training steps} & \textbf{Total time (h)} \\
        \multirow{4}{*}{SST-2} & MeZO & 0.658 & 20K & 3.656 \\
        & LOZO & 0.658 & 20K & 3.654 \\
        & HiZOO-L & 1.435 & 20K & 7.970 \\
        & StructZO & 26.58 & 400 & 2.954 \\
        \midrule
        \multirow{4}{*}{RTE} & MeZO & 1.762 & 20K & 9.79 \\
        & LOZO & 1.762 & 20K & 9.79  \\
        & HiZOO-L & 3.653 & 20K & 20.30 \\
        & StructZO & 82.53 & 400 & 9.17 \\
        \midrule
        \multirow{4}{*}{BoolQ} & MeZO & 2.698 & 20K & 14.99 \\
        & LOZO & 2.698 & 20K & 14.99 \\
        & HiZOO-L & 5.507 & 20K & 30.60 \\
        & StructZO & 132.87 & 400 & 14.76 \\ 
        \bottomrule
        \end{tabular}
        }
         \captionof{table}{Wall-clock time for SST-2, RTE and BoolQ on Llama3.1-8B. We measured the time overhead for the first 100 steps and the first 10 steps for both ZO baselines and StructZO respectively, and thereby estimated their per-step and total execution times. }
    \label{tab:time}
    \end{minipage}
\end{figure}

\section{Conclusion}
\label{seq:conclusion}
We introduced StructZO, a structured zeroth-order optimization framework for memory-efficient LLM fine-tuning. By aggregating large-query directional signals in an informative subspace and applying spectral shaping in reduced coordinates, StructZO improves the effectiveness of forward-only optimization. As a limitation, StructZO depends on the forward-induced subspace capturing optimization-relevant directions. When this structure is weak, the benefit may decrease, as suggested by the projection-bias term in our analysis. In addition, our experiments focus on GLUE/SuperGLUE-style limited-data fine-tuning, and extending the evaluation to longer-context and more complex generation tasks remains promising future work.


\newpage
\bibliographystyle{IEEEtran}
\bibliography{references}

\newpage
\appendix
\section{Proof}
\label{appendix:proof}
\begin{proof}[Proof of Proposition~\ref{prop:sub+muon}]
Recall that the matrix sign operation used in Muon can be written as the polar factor
\[
    \operatorname{msign}(X) = X(X^\top X)^{-1/2},
\]
which is equivalent to $\operatorname{msign}(X)=U_XV_X^\top$ for the SVD $X=U_X\Sigma_XV_X^\top$.
Let the SVD of $G$ be
\[
    G = U\Sigma V^\top .
\]
When $P=V$, we have
\[
    GP = G V = U\Sigma V^\top V = U\Sigma .
\]
Therefore,
\[
\begin{aligned}
    \operatorname{msign}(GP)
    &= GP\big((GP)^\top(GP)\big)^{-1/2} \\
    &= U\Sigma
    \big((U\Sigma)^\top(U\Sigma)\big)^{-1/2} \\
    &= U\Sigma(\Sigma^\top\Sigma)^{-1/2}.
\end{aligned}
\]
Since $\Sigma$ is diagonal with nonnegative singular values, we have
\[
    \Sigma(\Sigma^\top\Sigma)^{-1/2} = I
\]
on the singular subspace. Hence
\[
    \operatorname{msign}(GP)=U.
\]
Multiplying by $P^\top=V^\top$ gives
\[
    \operatorname{msign}(GP)P^\top
    =
    U V^\top
    =
    \operatorname{msign}(G).
\]
This completes the proof.
\end{proof}

\paragraph{Interpretation as approximate low-rank spectral shaping.}
Proposition~\ref{prop:sub+muon} gives an exact identity when the reduced basis matches the right singular space of the gradient. In practice, the activation-induced basis only approximates this space. We therefore do not claim exact recovery of the full-space Muon update; instead, the empirical diagnostics in Figure~\ref{fig:geometry} show that the constructed basis captures substantially more gradient-relevant right-side structure than a random basis, which explains why applying spectral shaping in the reduced coordinates can be effective.

For Theorem \ref{thm:mse bound StructZO}, we use the following assumptions.
\begin{assumption}[Bounded stochastic gradient noise]
\label{ass:sigma^2}
There exists $\sigma^2>0$ such that
\[
\mathbb{E}_{\xi}\|G_\xi-G\|_F^2\le \sigma^2.
\]
\end{assumption}

\begin{assumption}[Projected alignment assumption]
\label{ass:proj align}
Let $C_\xi=A_\xi^\top A_\xi$. There exists $\beta_G>0$ such that
\[
\mathrm{tr}\!\left(G^\top G(I-\Pi_\xi)\right)
\le
\beta_G\,
\mathrm{tr}\!\left(C_\xi(I-\Pi_\xi)\right)
\]
for the activation-induced projector $\Pi_\xi$.
\end{assumption}

\begin{assumption}[Activation Gram sketchability]
\label{ass:act gram}
There exists $\alpha_{r,k}\ge 1$ such that the randomized QR subspace
satisfies
\[
\mathbb{E}_{\Omega}
\left[
\mathrm{tr}\!\left(C_\xi(I-\Pi_\xi)\right)\mid \xi
\right]
\le
\alpha_{r,k}
\sum_{j>r}\lambda_j(C_\xi),
\]
where $\lambda_1(C_\xi)\ge \cdots\ge \lambda_n(C_\xi)\ge 0$ are the
eigenvalues of $C_\xi$.
\end{assumption}

\subsection{A Coupled MSE Decomposition}

We first prove a decomposition for the pre-Muon estimator. Throughout
this section, we fix one weight matrix $W\in\mathbb{R}^{m\times n}$.
Let $F_\xi(W)$ be the stochastic loss, and define
\[
G_\xi=\nabla F_\xi(W),\qquad
G=\nabla f(W)=\mathbb{E}_\xi[G_\xi].
\]
Let $P_\xi=P(A_\xi,\Omega)\in\mathbb{R}^{n\times r}$ be the
activation-induced subspace obtained from $A_\xi^\top A_\xi\Omega$ by
QR decomposition, and let
\[
\Pi_\xi=P_\xi P_\xi^\top.
\]
We assume $P_\xi^\top P_\xi=I_r$, so that $\Pi_\xi$ is an orthogonal
projector.

\begin{prop}[Coupled MSE decomposition]
Let $z_1,\ldots,z_q\overset{\mathrm{i.i.d.}}{\sim}
\mathcal{N}(0,I_{mr})$ be independent of $(\xi,\Omega)$, and define
\[
\widehat G
=
\frac{1}{q}\sum_{i=1}^q
\frac{F_\xi(W+\mu z_iP_\xi^\top)-F_\xi(W)}{\mu}
z_iP_\xi^\top .
\]
Under standard smoothness conditions on $F_\xi$,
\[
\begin{aligned}
\mathbb{E}_{\xi,\Omega,z}\|\widehat G-G\|_F^2
&=
\frac{mr+1}{q}
\mathbb{E}_{\xi,\Omega}
\mathrm{tr}(G_\xi^\top G_\xi\Pi_\xi)
\\
&\quad+
\mathbb{E}_{\xi,\Omega}
\mathrm{tr}\!\left((G_\xi-G)^\top(G_\xi-G)\Pi_\xi\right)
\\
&\quad+
\mathbb{E}_{\xi,\Omega}
\mathrm{tr}\!\left(G^\top G(I-\Pi_\xi)\right)
+
O(\mu^2).
\end{aligned}
\]
\end{prop}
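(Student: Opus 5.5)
The plan is to first prove the identity for the idealized estimator obtained as $\mu\to0$, and then show that finite $\mu$ changes the MSE only by $O(\mu^2)$. Throughout, we work conditionally on $(\xi,\Omega)$. Given $(\xi,\Omega)$, the quantities $G_\xi$ and $\Pi_\xi$ are fixed, and the $z_i$ are independent of them. Because of this, the statistical coupling between $\Pi_\xi$ and $\xi$ plays no role until the final outer expectation $\mathbb{E}_{\xi,\Omega}$, which we take at the very end.

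\textbf{Step 1 (reduction to reduced coordinates).} Set $g_\xi:=G_\xi P_\xi\in\mathbb{R}^{m\times r}$. A first-order Taylor expansion gives $F_\xi(W+\mu z_iP_\xi^\top)-F_\xi(W)=\mu\langle G_\xi,z_iP_\xi^\top\rangle+R_i$. Here $\langle G_\xi,z_iP_\xi^\top\rangle=\langle g_\xi,z_i\rangle$, and the remainder is $R_i=\tfrac{\mu^2}{2}\langle z_iP_\xi^\top,\nabla^2F_\xi\,[z_iP_\xi^\top]\rangle+O(\mu^3\|z_i\|^3)$. The main part of $\widehat G$ is therefore $\widehat G_0=\bigl(\tfrac1q\sum_i\langle g_\xi,z_i\rangle z_i\bigr)P_\xi^\top$. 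This is the standard Gaussian RGE in dimension $d=mr$, right-multiplied by $P_\xi^\top$. Since $P_\xi^\top P_\xi=I_r$, right multiplication by $P_\xi^\top$ is an isometry on $\mathbb{R}^{m\times r}$. Consequently $\mathbb{E}_z\widehat G_0=g_\xi P_\xi^\top=G_\xi\Pi_\xi$. For the variance we use the Gaussian moment identity $\mathbb{E}\|\langle g,z\rangle z\|^2=(d+2)\|g\|^2$, which gives
\[
\mathbb{E}_z\|\widehat G_0-G_\xi\Pi_\xi\|_F^2=\tfrac{d+1}{q}\|g_\xi\|_F^2=\tfrac{mr+1}{q}\,\mathrm{tr}(G_\xi^\top G_\xi\Pi_\xi).
\]

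\textbf{Step 2 (bias/noise split).} We write
\[
\widehat G_0-G=(\widehat G_0-G_\xi\Pi_\xi)+(G_\xi-G)\Pi_\xi-G(I-\Pi_\xi).
\]
The first summand has conditional mean zero, so its cross terms with the other two vanish under $\mathbb{E}_z$. The remaining cross term is $\langle (G_\xi-G)\Pi_\xi,G(I-\Pi_\xi)\rangle=\mathrm{tr}\bigl(\Pi_\xi(G_\xi-G)^\top G(I-\Pi_\xi)\bigr)$. By cyclicity this equals a trace containing the factor $(I-\Pi_\xi)\Pi_\xi=0$, so it vanishes pointwise in $(\xi,\Omega)$, with no independence needed. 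The two surviving squared norms are $\mathrm{tr}((G_\xi-G)^\top(G_\xi-G)\Pi_\xi)$ and $\mathrm{tr}(G^\top G(I-\Pi_\xi))$, using $\Pi_\xi^2=\Pi_\xi$ and $(I-\Pi_\xi)^2=I-\Pi_\xi$. Taking $\mathbb{E}_{\xi,\Omega}$ then yields the three exact terms in the statement.

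\textbf{Step 3 (smoothing error) — the main obstacle.} The full error decomposes as $\widehat G-G=(\widehat G_0-G)+E_\mu$, where $E_\mu=\tfrac1{q}\sum_i (R_i/\mu)\,z_iP_\xi^\top$. The term $\mathbb{E}\|E_\mu\|_F^2$ is $O(\mu^2)$ provided the Hessian is bounded and Gaussian moments are finite. The real difficulty is the cross term $2\,\mathbb{E}\langle \widehat G_0-G,E_\mu\rangle$, which is naively $O(\mu)$ rather than $O(\mu^2)$. My plan is to show that its leading $O(\mu)$ part vanishes by a parity argument. That leading part is a sum of expectations of polynomials in the $z_i$ of odd total degree. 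If both factors share the same index $i$, the degree is $2+2=4$ from $\langle g,z_i\rangle z_i$ paired with $(z_i^\top H z_i)z_i$, plus one more $z_i$ from the inner product, giving degree $5$. If the factor is the deterministic part $G_\xi\Pi_\xi-G$, the degree is $3$. If the indices differ, independence applies and $\mathbb{E}[(z_i^\top Hz_i)z_i]=0$. In every case the expectation is zero by symmetry of the Gaussian. What survives is of order $\mu^2$, coming from the third-order Taylor term, and it is controlled under a Lipschitz-Hessian (``standard smoothness'') assumption together with finite Gaussian moments. This completes the identity up to $O(\mu^2)$.
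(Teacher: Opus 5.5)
Your Steps 1--2 follow the paper's proof essentially line for line, so the argument is correct. The paper uses the same ingredients: the first-order part with conditional mean $G_\xi\Pi_\xi$, the Gaussian fourth-moment identity giving $\tfrac{mr+1}{q}\|G_\xi P_\xi\|_F^2$, the isometry $M\mapsto MP_\xi^\top$, and the orthogonal split $G_\xi\Pi_\xi-G=(G_\xi-G)\Pi_\xi-G(I-\Pi_\xi)$.

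The only difference is Step 3. The paper merely asserts that the smoothing remainder contributes $O(\mu^2)$. You rightly observe that the cross term $2\,\mathbb{E}\langle\widehat G_0-G,E_\mu\rangle$ is a priori only $O(\mu)$, and you kill its leading part by Gaussian odd-moment symmetry under a Lipschitz-Hessian assumption. That justification is missing from the paper and is genuinely needed. Under mere Lipschitz-gradient smoothness the cross term can be $\Theta(\mu)$: for example, $F(x)=gx+\tfrac{L}{2}x|x|$ at $x=0$ with $g\neq 0$.
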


\begin{proof}
We decompose the finite-difference estimator into its first-order part
and a smoothing remainder. Define
\[
\widehat G^{(1)}
=
\frac{1}{q}\sum_{i=1}^q
\langle G_\xi,z_iP_\xi^\top\rangle z_iP_\xi^\top .
\]
Since
\[
\langle G_\xi,z_iP_\xi^\top\rangle
=
\langle G_\xi P_\xi,z_i\rangle,
\]
we can write
\[
\widehat G^{(1)}
=
\frac{1}{q}\sum_{i=1}^q
\langle G_\xi P_\xi,z_i\rangle z_iP_\xi^\top.
\]
The difference between $\widehat G$ and $\widehat G^{(1)}$ is the
finite-difference smoothing error. Under standard smoothness conditions,
its contribution to the squared error is $O(\mu^2)$.

We first analyze $\widehat G^{(1)}$. Conditioning on $(\xi,\Omega)$,
both $G_\xi$ and $P_\xi$ are fixed. Let
\[
H_\xi=G_\xi P_\xi\in\mathbb{R}^{m\times r}.
\]
Then
\[
\widehat G^{(1)}
=
\left(
\frac{1}{q}\sum_{i=1}^q
\langle H_\xi,z_i\rangle z_i
\right)P_\xi^\top.
\]
Because $z_i\sim\mathcal{N}(0,I_{mr})$, we have
\[
\mathbb{E}_z[\langle H_\xi,z_i\rangle z_i\mid \xi,\Omega]
=
H_\xi.
\]
Therefore,
\[
\mathbb{E}_z[\widehat G^{(1)}\mid \xi,\Omega]
=
H_\xi P_\xi^\top
=
G_\xi P_\xi P_\xi^\top
=
G_\xi\Pi_\xi.
\]

Next, we compute the conditional variance. Let
\[
h_\xi=\mathrm{vec}(H_\xi),\qquad u_i=\mathrm{vec}(z_i).
\]
Then $u_i\sim\mathcal{N}(0,I_{mr})$ and
\[
\mathrm{vec}\!\left(\langle H_\xi,z_i\rangle z_i\right)
=
(u_i^\top h_\xi)u_i.
\]
Using the Gaussian fourth-moment identity,
\[
\mathbb{E}\left[(u^\top h)^2 uu^\top\right]
=
\|h\|_2^2 I + 2hh^\top,
\]
we obtain
\[
\mathrm{Cov}\left((u^\top h)u\right)
=
\|h\|_2^2 I + hh^\top.
\]
Hence
\[
\mathbb{E}_z
\left[
\left\|
\frac{1}{q}\sum_{i=1}^q
\langle H_\xi,z_i\rangle z_i
-
H_\xi
\right\|_F^2
\middle| \xi,\Omega
\right]
=
\frac{mr+1}{q}\|H_\xi\|_F^2.
\]
Since $P_\xi^\top P_\xi=I_r$, right multiplication by $P_\xi^\top$
preserves the Frobenius norm of matrices in $\mathbb{R}^{m\times r}$:
\[
\|MP_\xi^\top\|_F^2=\mathrm{tr}(M P_\xi^\top P_\xi M^\top)=\|M\|_F^2.
\]
Thus,
\[
\mathbb{E}_z
\left[
\|\widehat G^{(1)}-G_\xi\Pi_\xi\|_F^2
\middle| \xi,\Omega
\right]
=
\frac{mr+1}{q}\|G_\xi P_\xi\|_F^2.
\]
Equivalently,
\[
\|G_\xi P_\xi\|_F^2
=
\mathrm{tr}(G_\xi^\top G_\xi\Pi_\xi).
\]

Now decompose the error to the full gradient $G$:
\[
\widehat G^{(1)}-G
=
(\widehat G^{(1)}-G_\xi\Pi_\xi)
+
(G_\xi\Pi_\xi-G).
\]
The cross term vanishes after conditioning on $(\xi,\Omega)$ because
\[
\mathbb{E}_z[\widehat G^{(1)}-G_\xi\Pi_\xi\mid \xi,\Omega]=0.
\]
Therefore,
\[
\begin{aligned}
\mathbb{E}_z
\left[
\|\widehat G^{(1)}-G\|_F^2
\middle| \xi,\Omega
\right]
&=
\frac{mr+1}{q}
\mathrm{tr}(G_\xi^\top G_\xi\Pi_\xi)
+
\|G_\xi\Pi_\xi-G\|_F^2.
\end{aligned}
\]

It remains to expand the second term. Let
\[
E_\xi=G_\xi-G.
\]
Then
\[
G_\xi\Pi_\xi-G
=
(G+E_\xi)\Pi_\xi-G
=
E_\xi\Pi_\xi-G(I-\Pi_\xi).
\]
Because $\Pi_\xi$ is an orthogonal projector,
\[
\Pi_\xi(I-\Pi_\xi)=0.
\]
Thus the two components are orthogonal in Frobenius inner product:
\[
\left\langle E_\xi\Pi_\xi,\; G(I-\Pi_\xi)\right\rangle_F
=
\mathrm{tr}\!\left(
\Pi_\xi E_\xi^\top G(I-\Pi_\xi)
\right)
=
0.
\]
Consequently,
\[
\|G_\xi\Pi_\xi-G\|_F^2
=
\|E_\xi\Pi_\xi\|_F^2
+
\|G(I-\Pi_\xi)\|_F^2.
\]
Equivalently,
\[
\|E_\xi\Pi_\xi\|_F^2
=
\mathrm{tr}\!\left(E_\xi^\top E_\xi\Pi_\xi\right),
\]
and
\[
\|G(I-\Pi_\xi)\|_F^2
=
\mathrm{tr}\!\left(G^\top G(I-\Pi_\xi)\right).
\]
Taking expectation over $(\xi,\Omega)$ and adding the smoothing remainder
$O(\mu^2)$ completes the proof.
\end{proof}

\subsection{Proof of Theorem~\ref{thm:mse bound StructZO}}

\begin{proof}
Starting from Proposition~\ref{prop:coupled-mse}, we bound the three
leading terms separately.

For the ZO query variance term, since $0\preceq \Pi_\xi\preceq I_n$,
\[
\mathrm{tr}(G_\xi^\top G_\xi\Pi_\xi)
=
\|G_\xi P_\xi\|_F^2
\le
\|G_\xi\|_F^2.
\]
Therefore,
\[
\frac{mr+1}{q}
\mathbb{E}_{\xi,\Omega}
\mathrm{tr}(G_\xi^\top G_\xi\Pi_\xi)
\le
\frac{mr+1}{q}
\mathbb{E}_{\xi}
\|G_\xi\|_F^2.
\]

For the stochastic gradient noise term, using again
$0\preceq \Pi_\xi\preceq I_n$, we have
\[
\mathrm{tr}\!\left((G_\xi-G)^\top(G_\xi-G)\Pi_\xi\right)
=
\|(G_\xi-G)P_\xi\|_F^2
\le
\|G_\xi-G\|_F^2.
\]
By the bounded stochastic gradient noise assumption,
\[
\mathbb{E}_{\xi,\Omega}
\mathrm{tr}\!\left((G_\xi-G)^\top(G_\xi-G)\Pi_\xi\right)
\le
\sigma^2.
\]

For the projection bias term, the activation-gradient alignment
assumption gives
\[
\mathrm{tr}\!\left(G^\top G(I-\Pi_\xi)\right)
\le
\beta_G
\mathrm{tr}\!\left(C_\xi(I-\Pi_\xi)\right).
\]
Taking conditional expectation over $\Omega$ and using activation Gram
sketchability,
\[
\mathbb{E}_{\Omega}
\left[
\mathrm{tr}\!\left(G^\top G(I-\Pi_\xi)\right)
\mid \xi
\right]
\le
\beta_G\alpha_{r,k}
\sum_{j>r}\lambda_j(C_\xi).
\]
Taking expectation over $\xi$ yields
\[
\mathbb{E}_{\xi,\Omega}
\mathrm{tr}\!\left(G^\top G(I-\Pi_\xi)\right)
\le
\beta_G\alpha_{r,k}
\mathbb{E}_{\xi}
\sum_{j>r}\lambda_j(C_\xi).
\]
Since $C_\xi=A_\xi^\top A_\xi$, combining the above inequalities gives
\[
\begin{aligned}
\mathbb{E}_{\xi,\Omega,z}\|\widehat G-G\|_F^2
&\le
\frac{mr+1}{q}
\mathbb{E}_{\xi}\|G_\xi\|_F^2
+
\beta_G\alpha_{r,k}
\mathbb{E}_{\xi}
\sum_{j>r}\lambda_j(A_\xi^\top A_\xi)
\\
&\quad+
\sigma^2
+
O(\mu^2).
\end{aligned}
\]
This completes the proof.
\end{proof}

\paragraph{On Projected alignment assumption.}

For a linear layer, the stochastic gradient often admits the form
\[
G_\xi=\Delta_\xi^\top A_\xi,
\]
where $A_\xi$ is the input activation and $\Delta_\xi$ is the output-side
error signal. Hence
\[
G_\xi^\top G_\xi
=
A_\xi^\top \Delta_\xi\Delta_\xi^\top A_\xi.
\]
If the output-side error satisfies
$\Delta_\xi\Delta_\xi^\top\preceq \beta_\xi I$, then
\[
G_\xi^\top G_\xi\preceq \beta_\xi A_\xi^\top A_\xi.
\]
This shows that the right-side gradient energy is controlled by the
activation Gram matrix. Therefore, directions with small activation Gram
energy are unlikely to carry large stable gradient energy, which motivates
the activation-gradient alignment assumption.

\paragraph{On activation Gram sketchability.}

The construction
\[
Y_\xi=A_\xi^\top(A_\xi\Omega)=C_\xi\Omega
\]
is a randomized range finder applied to the positive semidefinite
activation Gram matrix $C_\xi=A_\xi^\top A_\xi$. The QR step extracts an
orthonormal basis for the sketched range. The sketchability assumption
states that this randomized subspace captures the dominant eigenspace of
$C_\xi$ up to a multiplicative factor $\alpha_{r,k}$. The right-hand side
\[
\sum_{j>r}\lambda_j(C_\xi)
\]
is the optimal rank-$r$ residual trace by the Ky Fan principle. Thus,
when the activation Gram matrix has fast spectral decay, the residual
projection error is small.

\begin{remark}[Comparison with full-space and random-subspace ZO]
For full-space Gaussian ZO with the same query number $q$, the leading
variance term scales as
\[
\frac{mn+1}{q}\mathbb{E}_{\xi}\|G_\xi\|_F^2.
\]
In contrast, our activation-induced estimator has
\[
\frac{mr+1}{q}
\mathbb{E}_{\xi,\Omega}
\mathrm{tr}(G_\xi^\top G_\xi\Pi_\xi)
\le
\frac{mr+1}{q}
\mathbb{E}_{\xi}\|G_\xi\|_F^2.
\]
Thus, the ZO search dimension is reduced from $mn$ to $mr$.

For an isotropic random rank-$r$ projector $\Pi_{\rm rand}$ satisfying
$\mathbb{E}[\Pi_{\rm rand}]=\frac{r}{n}I_n$, the projection bias is
\[
\mathbb{E}_{\Pi_{\rm rand}}
\|G(I-\Pi_{\rm rand})\|_F^2
=
\left(1-\frac{r}{n}\right)\|G\|_F^2.
\]
When $r\ll n$, this random projection discards most of the full-gradient
energy. By contrast, our activation-induced projector has projection
bias controlled by
\[
\mathbb{E}_{\xi}\sum_{j>r}\lambda_j(A_\xi^\top A_\xi),
\]
which can be much smaller when the activation Gram matrix has low
effective rank.
\end{remark}

\newpage
\section{Experimental details}
\begin{table}[tbhp]
  \centering
  \caption{The hyperparameter grids for OPT-1.3B and Llama3.1-8B; the weight decay is set to zero. }
  \resizebox{0.77\textwidth}{!}{%
  \begin{tabular}{lrc}
    \toprule
    Experiment & Hyperparameters & Values\\
    \midrule
    MeZO & Batch size & 16\\
    & Learning rate&  $\left\{1e-6,1e-7\right\}$ \\
         & $\mu$ & $1e-3$\\
    \midrule
    HiZOO & Batch size & 16\\
    & Learning rate&  $\left\{1e-6,1e-7\right\}$ \\
         & $\mu$ & $1e-3$\\
         & $\alpha_{\text{HiZOO}}$ & $\left\{1e-7,1e-8\right\}$ \\
         \midrule
    HiZOO-L  &  Batch size & 16\\
    & Learning rate&  $\left\{1e-6,1e-7\right\}$ \\
         & $\mu$ & $1e-3$\\
         & $\alpha_{\text{HiZOO}}$ & $\left\{1e-9,1e-10\right\}$ \\
        \midrule
    LOZO & Batch size & 16\\
    & Learning rate&  $\left\{1e-6,1e-7\right\}$ \\
         & $\mu$ & $1e-3$\\
         & $r$ & $\left\{1,2\right\}$\\
         & Step interval & $\left\{50,100\right\}$\\
         \midrule
    StructZO & Batch size & 16\\
    & Learning rate&  $\left\{1e-2,1e-3\right\}$ \\
         & $\mu$ & $1e-3$\\
         & $k$ & 10 \\
         & $r$ & 64 \\
    \midrule
    StructZO-S &  Batch size & 16\\
    & Learning rate&  $\left\{1e-5,1e-6\right\}$ \\
         & $\mu$ & $1e-3$\\
         & $k$ & 10 \\
         & $r$ & 64 \\
        \midrule
    StructZO-R &  Batch size & 16\\
    & Learning rate&  $\left\{1e-2,1e-3\right\}$ \\
         & $\mu$ & $1e-3$\\
         & $k$ & 10 \\
         & $r$ & 64 \\
        \midrule
    FO-Adam (OPT-1.3B) & Batch size & 4,8 \\
     & Learning rate & $\left\{1e-4,5e-5,1e-5\right\}$\\
     \midrule
    FO-Adam (Llama3.1-8B) & Batch size & 1,2 \\
     & Learning rate & $\left\{1e-4,5e-5,1e-5\right\}$\\
    \bottomrule
    \\
  \end{tabular}
  }
  \label{tab:hyperparameter}
\end{table}

\begin{table}[t]
    \centering
    \caption{
Peak GPU memory usage (GiB) on MultiRC with batch size 1 and an average sequence length of approximately 400 tokens. 
HiZOO-L denotes the low-rank version of HiZOO. 
FO-Adam results for OPT-13B and OPT-30B are taken from MeZO \cite{malladi2023fine}. 
}
    \resizebox{\textwidth}{!}{
        \begin{tabular}{lccccc}
        \toprule
        \textbf{Model} & MeZO & LOZO & HiZOO-L & StructZO & FO-Adam  \\
        \midrule
        \textbf{OPT1.3B} & 4.13 &4.08 & 4.61  &4.29 & 27.14 \\
         \textbf{Llama3.1-8B} & 20.55 &19.59 & 23.35  & 20.8&  85.29 \\
         \textbf{OPT13B} & 27.27 &26.91 & 29.49  &28.33 & 316 \\
         \textbf{OPT30B} & 60.98 &60.08 & 64.64  & 63.67& 633 \\
        \bottomrule
        \end{tabular}
        }
    \label{tab:detail_memory}
\end{table}

\begin{table}[t]
    \centering
    \caption{Detailed wall-clock comparison on OPT-1.3B and Llama3.1-8B. For MeZO, LOZO, and HiZOO/HiZOO-L, we report the average time per optimization step measured over the first 100 steps and estimate the total training time using 20K steps. For StructZO, we report the average time per optimization step measured over the first 10 steps and estimate the total training time using 400 steps, matching the main experimental setting with $q=100$. }
    \resizebox{\textwidth}{!}{
        \begin{tabular}{llllll}
        \toprule
        \textbf{Model} & \textbf{Task} & \textbf{Method} & \textbf{Time / step (s)} & \textbf{Training steps} & \textbf{Total time (h)} \\
        \multirow{12}{*}{\textbf{OPT1.3B}} & \multirow{4}{*}{SST-2} & MeZO & 0.146 & 20K & 0.811 \\
        & & LOZO & 0.145 & 20K & 0.804 \\
        & & HiZOO & 0.290 & 20K & 1.613 \\
        & & StructZO & 6.890 & 400 & 0.766 \\
        \cmidrule(lr){2-6}
        & \multirow{4}{*}{RTE} & MeZO & 0.358 & 20K & 1.99 \\
        & & LOZO & 0.360 & 20K & 2.00 \\
        & & HiZOO & 0.717 & 20K & 3.98 \\
        & & StructZO & 17.59 & 400 & 1.96 \\ 
        \cmidrule(lr){2-6}
         & \multirow{4}{*}{BoolQ} & MeZO & 0.558 & 20K & 3.102 \\
        & & LOZO & 0.554 & 20K & 3.080 \\
        & & HiZOO & 1.112 & 20K & 6.178 \\
        & & StructZO & 28.13 & 400 & 3.125 \\ 
        \midrule\midrule
        \multirow{12}{*}{\textbf{Llama3.1-8B}} & \multirow{4}{*}{SST-2} & MeZO & 0.658 & 20K & 3.656 \\
        & & LOZO & 0.658 & 20K & 3.654 \\
        & & HiZOO-L & 1.435 & 20K & 7.970 \\
        & & StructZO & 26.58 & 400 & 2.954 \\
        \cmidrule(lr){2-6}
        & \multirow{4}{*}{RTE} & MeZO & 1.762 & 20K & 9.79 \\
        & & LOZO & 1.762 & 20K & 9.79  \\
        & & HiZOO-L & 3.653 & 20K & 20.30 \\
        & & StructZO & 82.53 & 400 & 9.17 \\
        \cmidrule(lr){2-6}
         & \multirow{4}{*}{BoolQ} & MeZO & 2.698 & 20K & 14.99 \\
        & & LOZO & 2.698 & 20K & 14.99 \\
        & & HiZOO-L & 5.507 & 20K & 30.60 \\
        & & StructZO & 132.87 & 400 & 14.76 \\ 
        \bottomrule
        \end{tabular}
        }
    \label{tab:time appendix}
\end{table}

\begin{table}[t]
    \centering
    \caption{
Component-wise wall-clock time breakdown of StructZO. 
The ZO forward evaluations dominate the per-step cost, while constructing the activation-induced subspace $P$ and applying the final ZO update introduce only modest overhead.
}
    \resizebox{\textwidth}{!}{
        \begin{tabular}{llccc}
        \toprule
        \textbf{Model} & \textbf{Task} & \textbf{Time / Subspace construction $P$ (\%)} & \textbf{Time / ZO forward (\%)} & \textbf{Time / ZO update (\%)} \\
        \multirow{3}{*}{\textbf{OPT1.3B}} & SST-2 & 5.43 &  83.91 & 10.66 \\
        & RTE & 6.37 & 89.55 & 4.08\\
        & BoolQ & 6.25 & 91.15 & 2.60\\
        \midrule\midrule
        \multirow{3}{*}{\textbf{Llama3.1-8B}} & SST-2 & 4.95 & 92.77 & 2.28 \\
        & RTE & 6.03& 93.26 & 0.71 \\
        & BoolQ &6.20 & 93.32 & 0.48\\
        \bottomrule
        \end{tabular}
        }
    \label{tab:time component}
\end{table}

\begin{table}[t]
    \centering
    \caption{
Performance of the reduced-space rank $r$ in StructZO. 
}
    \resizebox{\textwidth}{!}{
        \begin{tabular}{llccccc}
        \toprule
        \textbf{Model} & \textbf{Task} & $r=16$ & $r=32$ & $r=64$ & $r=128$ & $r=256$ \\
        \midrule
        \textbf{OPT1.3B} & RTE & 63.2 & 67.8  & 69.0&66.1 & 68.9 \\
        \midrule
         \textbf{Llama3.1-8B} & SST-2 & 94.0  & 93.8  & 93.7& 93.8& 92.8 \\
        \bottomrule
        \end{tabular}
        }
    \label{tab:rank_r}
\end{table}

\begin{table}[t]
    \centering
    \caption{
Performance of StructZO with different query numbers $q$ on OPT-1.3B BoolQ and Llama3.1-8B CB. 
All results use the same StructZO pipeline while varying the number of ZO queries aggregated at each optimization step. 
}
    \resizebox{\textwidth}{!}{
        \begin{tabular}{llcccccc}
        \toprule
        \textbf{Model} & \textbf{Task} & $q=4$ & $q=10$ & $q=20$ & $q=50$ & $q=100$ & $q=200$ \\
        \midrule
        \textbf{OPT1.3B} & BoolQ & 63.5 & 62.4  & 65.3& 66.9 & 68.5 &  66.1 \\
        \midrule
         \textbf{Llama3.1-8B} & CB & 64.3  & 66.1  & 69.7 & 80.4 & 92.9 & 78.6\\
        \bottomrule
        \end{tabular}
        }
    \label{tab:diffenrent_q}
\end{table}
\label{app:experiment}
\paragraph{Datasets.}
We evaluate on classification, multiple-choice and generative tasks drawn from GLUE and SuperGLUE \cite{wang2018glue,wang2019superglue}. The individual tasks are SST-2 \cite{socher2013recursive}, RTE \cite{dagan2005pascal}, CB \cite{de2019commitmentbank}, BoolQ \cite{clark2019boolq}, WiC \cite{pilehvar2018wic}, MultiRC \cite{khashabi2018looking}, COPA \cite{roemmele2011choice}, ReCoRD \cite{zhang2018record} and DROP \cite{dua2019drop}. For each task, we randomly select 1024 training samples and 512 validation samples. For small datasets with very limited examples (CB and COPA) we reserve 100 examples for validation and use the remainder for training. We also adopt the same prompt design approach as MeZO \cite{malladi2023fine}, following GPT-3 and PromptSource with minor modifications. All experiments are conducted on a single NVIDIA H20 96GiB GPU.

\paragraph{Hyperparameters.}
Unless otherwise noted, each Zeroth-Order (ZO) algorithm is trained for 20K optimization steps and FO-Adam is trained for 5 epochs. For tokenization, we set a uniform maximum length $\text{MAX\_LEN}=2048$ which can cover the vast majority of language sequences across all tasks. For ZO methods we use a batch size of 16, and for FO-Adam we use the largest batch size that fits the GPU under identical system settings. To ensure a fair comparison, hyperparameters such as learning rate $\eta$ is selected by a small grid search and perturbation length $\mu$ is set $\mu=1e-3$ as default for each Zeroth-Order algorithm. Specifically, for HiZOO and HiZOO-L, we sweep Hessian smoothing coefficients and decay factors, and for LOZO we sweep the low-rank dimension $r$ and perturbation interval. For StructZO, the default oversampling factor used is $k=10$, and the low-rank subspace rank is $r=64$. Final hyperparameter settings are listed in Table~\ref{tab:hyperparameter}.

\paragraph{Detailed memory comparison.}
Table~\ref{tab:detail_memory} reports the detailed peak GPU memory usage on MultiRC with batch size 1 and an average sequence length of about 400 tokens. StructZO maintains memory usage close to inference-level ZO baselines across all tested model scales. Compared with MeZO and LOZO, StructZO introduces only a small additional memory cost, e.g., 4.29 GiB versus 4.13 GiB on OPT-1.3B, 20.80 GiB versus 20.55 GiB on Llama3.1-8B, and 63.67 GiB versus 60.98 GiB on OPT-30B. 
This modest overhead mainly comes from storing the activation-induced basis $P$, while the query perturbations are regenerated through the resampling trick rather than stored explicitly. In contrast, FO-Adam requires substantially larger memory due to backpropagation activations and optimizer states. StructZO reduces peak memory from 27.14 GiB to 4.29 GiB on OPT-1.3B, from 85.29 GiB to 20.80 GiB on Llama3.1-8B, from 316 GiB to 28.33 GiB on OPT-13B, and from 633 GiB to 63.67 GiB on OPT-30B. These results show that StructZO preserves the memory advantage of forward-only ZO optimization while scaling to substantially larger models than standard FO fine-tuning under the same memory budget.

\paragraph{Wall-clock time.}
Table~\ref{tab:time appendix} provides a detailed wall-clock comparison on OPT-1.3B and Llama3.1-8B across SST-2, RTE, and BoolQ. For single-query ZO baselines, we measure the average time over the first 100 optimization steps and estimate the total training time using 20K steps. For StructZO, we measure the average time over the first 10 optimization steps and estimate the total training time using 400 steps, following our main experimental setting with $q=100$. As expected, StructZO has a substantially larger per-step cost because each update performs large-query estimation and constructs structured perturbations in the activation-induced subspace. However, this cost should be interpreted together with the number of optimization steps: each StructZO step aggregates much richer directional information, allowing the method to use 50$\times$ fewer update steps than the single-query baselines. As a result, StructZO achieves comparable or lower end-to-end training time than MeZO and LOZO in most settings, and is consistently much faster than HiZOO/HiZOO-L. For example, on Llama3.1-8B, StructZO reduces the estimated total time from 3.656h to 2.954h on SST-2, from 9.79h to 9.17h on RTE, and from 14.99h to 14.76h on BoolQ compared with MeZO. These results show that although StructZO trades higher per-step latency for more informative updates, the resulting reduction in required optimization steps leads to favorable overall wall-clock efficiency.

\paragraph{Time breakdown of StructZO.}
To further understand where the computational cost of StructZO comes from, Table~\ref{tab:time component} reports the per-step wall-clock time breakdown into subspace construction, ZO forward evaluation, and ZO update. Across both OPT-1.3B and Llama3.1-8B, the dominant cost consistently comes from the ZO forward evaluations, which account for 83.91\%--93.32\% of the total step time. In contrast, constructing the activation-induced basis $P$ only contributes about 4.95\%--6.37\% of the step time, showing that the randomized subspace extraction introduces a relatively small overhead. The final ZO update, including reduced-space aggregation and spectral shaping, takes an even smaller fraction, especially on Llama3.1-8B where it is below 2.3\% for all three tasks. These results indicate that the main computational cost of StructZO is not the structured subspace construction or the spectral update itself, but rather the large-query forward evaluations. This also suggests that the proposed structure adds only modest algorithmic overhead beyond the necessary forward-query cost of multi-query ZO estimation.

\paragraph{Effect of query number.}
Table~\ref{tab:diffenrent_q} reports the downstream performance of StructZO under different query numbers $q$. 
The query number controls how many directional evaluations are aggregated within the activation-induced reduced space at each optimization step. 
When $q$ is small, the ZO estimator remains relatively noisy even though the search space is structured, leading to weaker performance. 
For example, on OPT-1.3B BoolQ, increasing $q$ from $4$ to $100$ improves accuracy from $63.5$ to $68.5$; on Llama3.1-8B CB, the improvement is even more pronounced, from $64.3$ to $92.9$. 
This trend is consistent with the training-loss curves in Figure~\ref{fig:different q}, where larger query budgets lead to faster and deeper loss reduction under the same forward-pass view. 
However, increasing $q$ beyond a certain point does not necessarily further improve downstream performance under a fixed computational budget. Overall, $q=100$ provides a strong balance between estimator quality, update frequency, and end-task performance, and we therefore use it as the default setting.

\paragraph{Effect of subspace rank.}
Table~\ref{tab:rank_r} studies the effect of the reduced-space rank $r$ in StructZO. 
The rank $r$ is an important hyperparameter because it directly controls the dimension and expressiveness of the activation-induced search space. 
When $r$ is too small, the reduced space may fail to preserve enough optimization-relevant directions, leading to a clear performance drop. 
For example, on OPT-1.3B RTE, increasing $r$ from $16$ to $64$ improves accuracy from $63.2$ to $69.0$, showing that a sufficiently expressive subspace is important for StructZO. 
However, once the rank is large enough, further increasing $r$ does not necessarily improve performance: using $r=128$ or $r=256$ gives comparable or slightly worse results on OPT-1.3B RTE, and the performance on Llama3.1-8B SST-2 remains relatively stable for $r\in\{16,32,64,128\}$ with a mild drop at $r=256$. 
This suggests that $r$ plays a meaningful role in balancing subspace expressiveness and ZO estimation efficiency. 
The default choice $r=64$ provides a strong trade-off: it is large enough to capture useful activation-induced directions, while avoiding the unnecessary increase in effective ZO dimension caused by overly large ranks.


\newpage

\end{document}